\documentclass{article}

\usepackage[preprint]{neurips_2026}

\usepackage[utf8]{inputenc} 
\usepackage[T1]{fontenc}    
\usepackage{hyperref}       
\usepackage{url}            
\usepackage{booktabs}       
\usepackage{amsfonts}       
\usepackage{nicefrac}       
\usepackage{microtype}      
\usepackage{xcolor}         

\usepackage{amsmath}
\usepackage{amsthm}

\usepackage{paralist}

\title{Structural Preservation and the Logical Expressiveness of Graph Neural Networks}

\author{Przemysław Andrzej Wałęga
\\
Queen Mary University of London
\\
\texttt{p.walega@qmul.ac.uk} \\
\And
Bernardo Cuenca Grau \\
University of Oxford \\
\texttt{bernardo.grau@cs.ox.ac.uk} \\
}

\usepackage{tikz}
\usepackage{cleveref}
\usetikzlibrary{shapes.geometric, positioning, fit, arrows}
\usepackage{todonotes}
\usepackage{thm-restate}
\declaretheorem[name=Theorem]{thm}
\newtheorem{theorem}[thm]{Theorem}

\newtheorem{definition}[thm]{Definition}
\newtheorem{lemma}[thm]{Lemma}
\newtheorem{proposition}[thm]{Proposition}
\newtheorem{corollary}[thm]{Corollary}

\providecommand{\lBrace}{\mathopen{\lbrace\mkern-3mu\vert}}
\providecommand{\rBrace}{\mathclose{\vert\mkern-3mu\rbrace}}

\newcommand{\true}{\ensuremath{\mathsf{true}}}
\newcommand{\false}{\ensuremath{\mathsf{false}}}
\newcommand{\prop}{\ensuremath{\mathsf{PROP}}}

\newcommand{\M}{\ensuremath{\mathfrak{M}}}
\newcommand{\N}{\ensuremath{\mathfrak{N}}}

\newcommand{\ML}{\ensuremath{\mathcal{ML}}}

\newcommand{\EML}{\ensuremath{\exists\mathcal{ML}}}
\newcommand{\EPML}{\ensuremath{\exists^+\mathcal{ML}}}

\newcommand{\GML}{\ensuremath{\mathcal{GML}}}
\newcommand{\EGML}{\ensuremath{\exists\mathcal{GML}}}
\newcommand{\EPGML}{\ensuremath{\exists^+\mathcal{GML}}}

\newcommand{\ALCQ}{\ensuremath{\mathcal{ALCQ}}}
\newcommand{\ELUQ}{\ensuremath{\mathcal{ELUQ}}}

\newcommand{\G}{\ensuremath{G}}

\newcommand{\emb}{\ensuremath{\lambda}}
\newcommand{\unr}{\ensuremath{\mathsf{Unr}}}

\newcommand{\GN}{\ensuremath{\mathcal{N}}}
\newcommand{\C}{\ensuremath{\mathcal{C}}}
\newcommand{\T}{\ensuremath{\mathcal{T}}}
\newcommand{\Lr}[1][]{%
  \ifx#1\empty
    \ensuremath{\mathcal{L}}%
  \else
    \ensuremath{\mathcal{L}^{#1}}%
  \fi
}   
\newcommand{\comb}{\ensuremath{\mathsf{comb}}}
\newcommand{\agg}{\ensuremath{\mathsf{agg}}}

\newcommand{\cls}{\ensuremath{\mathsf{cls}}}

\newcommand{\s}{\ensuremath{\mathsf{s}}}

\newcommand{\ac}{\ensuremath{\mathsf{AC}}}

\newcommand{\GNN}[2]{\ensuremath{\text{GNN}_{#1}^{#2}}}

\newcommand{\Tmin}{\ensuremath{\mathcal{T}_{\mathsf{min}}}}

\begin{document}

\maketitle

\begin{abstract}
Bridges between graph neural networks (GNNs) and logical formalisms have been established by fixing architectural choices, such as the types of aggregation, combination, and activation functions. These choices define restricted classes of GNNs for which tight correspondences with logical formalisms can be obtained, by showing that logical formulae can be translated into equivalent GNNs and, conversely, that GNNs can be translated into equivalent formulae. 

In this paper we take a semantic perspective by establishing the logical expressiveness of classes of GNN classifiers that are preserved under structural  properties: embeddings (extensions), injective homomorphisms, and homomorphisms. We show that, for each such property, there exists a fragment of graded modal logic characterising the class of GNNs. In particular, preservation under embeddings, injective homomorphisms, and homomorphisms corresponds to existential graded modal logic, its existential-positive fragment, and existential-positive modal logic, respectively. These results characterise the expressiveness of broad classes of GNNs independently of specific architectural choices, but we also show that each of these classes admits a GNN architecture of the same expressiveness.

Technically, our approach uses a new well-quasi-order result for trees of bounded height, yielding finite representations of unravelling-invariant classes. 
\end{abstract}

\section{Introduction}

Graph neural networks (GNNs) \cite{DBLP:conf/icml/GilmerSRVD17}
are models designed to operate directly on graphs of variable size while ensuring that model predictions are invariant under graph isomorphisms. 
In the aggregate-combine paradigm, each node maintains a vector representation that is iteratively updated by aggregating information from its neighbours. 
This design induces a strong structural bias: GNNs with $L$ layers are invariant under isomorphisms and inherently local, in the sense that node representations depend only on the node's $L$-hop neighbourhood.
These two properties bring GNNs close to logical formalisms, in particular modal logics, which are also local and isomorphism-invariant \cite{DBLP:conf/iclr/BarceloKM0RS20,hauke2025aggregate,boundedGNNs}.
Establishing such connections 
is beneficial, as logic provides principled tools for analysing expressiveness and enabling symbolic methods for verification and explanation.

These connections have been studied extensively. In particular, bridges between 
GNNs and logic have been 
established by fixing architectural choices, such as 
aggregation, combination, and activation functions.
These choices define restricted families of GNN classifiers for which 
precise correspondences with logical formalisms can be 
obtained. In particular, prior work has identified such 
families corresponding to extensions of first-order logic 
with counting terms \cite{grohe2024descriptive,DBLP:conf/ijcai/NunnSST24} and Presburger quantifiers \cite{benedikt_et_al:LIPIcs.ICALP.2024.127}, as well as 
to rule-based knowledge representation formalisms such as 
Datalog \cite{tena2025expressive,DBLP:conf/kr/CucalaGMK23}.

In this paper, we complement such architecture-driven
characterisations with a semantic framework based on
structural properties that impose strictly stronger constraints on classifiers than invariance under graph isomorphisms, namely preservation under embeddings, injective homomorphisms, and homomorphisms.
We show that 
for each such property, the corresponding class of GNN classifiers has the same expressiveness as a fragment of graded modal logic (\Cref{th:preserve-unified}, \Cref{GNNspreserved}):
\begin{compactitem}
\item The class of GNN classifiers preserved under \textbf{embeddings} has the same expressiveness as existential graded modal logic $\EGML$,
\item The class of  GNN classifiers preserved under \textbf{injective homomorphisms} has the same expressiveness as existential-positive graded modal logic $\EPGML$, and
\item The class of GNN classifiers preserved under \textbf{homomorphisms} has the same expressiveness as existential-positive modal logic $\EPML$.
\end{compactitem}

Our approach is inspired by finite model theory, where preservation theorems relate structural properties to logical definability \cite{DBLP:books/sp/Libkin04,DBLP:journals/bsl/Rosen02,DBLP:series/txtcs/GradelKLMSVVW07,DBLP:journals/jolli/Rosen97,DBLP:journals/apal/AbramskyR24}. Classical results include the van Benthem–Rosen theorem \cite{DBLP:journals/jolli/Rosen97}, which characterises modal logic as the bisimulation-invariant fragment of first-order logic, and Rossman’s theorem \cite{DBLP:journals/jacm/Rossman08}, which characterises existential-positive first-order logic as the fragment preserved under homomorphisms.
We adapt this methodology to the setting of GNNs. Since existing preservation theorems do not directly apply in this setting, we develop new results tailored to graph structures induced by GNN computations.

The key intuition underlying our approach is that, in addition to being invariant under isomorphisms, GNNs are inherently local: in a GNN with $L$ layers, the representation of a node depends only on its $L$-hop neighbourhood \cite{DBLP:conf/iclr/BarceloKM0RS20,DBLP:conf/icml/WangZ22,DBLP:conf/aaai/0001RFHLRG19}. This locality allows us to reduce the analysis of GNNs to trees obtained via unravelling \cite{DBLP:conf/iclr/BarceloKM0RS20}. Combined with preservation under structural relations, it yields the logical characterisations stated above.

The main technical challenge is to obtain a finite representation 
of such classes. To address this, we show a new result on well-quasi-orders (\Cref{embedding_wqo}) which is of independent interest:
\begin{compactitem}
\item Every class of trees of bounded height is \textbf{well-quasi-ordered} by the embedding relation. 
\end{compactitem}

Since $L$-unravellings have bounded height and well-quasi-orders 
admit only finitely many minimal elements, this yields a finite 
representation of each class via its minimal trees. At a high 
level, each minimal tree can be characterised by a logical formula, and the 
entire class is defined by a finite disjunction of such formulae. 
The fragment of modal logic is determined by the preservation relation.

In addition to the semantic characterisations, we show 
that each class of GNN classifiers defined via preservation properties can be captured by a specific architecture. 
In particular  (\Cref{MONMGNN}):
\begin{compactitem}
\item GNNs preserved under \textbf{injective homomorphisms} have the same expressiveness as monotonic GNNs (i.e., GNNs with non-decreasing aggregation and combination functions),
\item GNNs preserved under \textbf{homomorphisms} have the same expressiveness as monotonic GNNs using $\mathrm{MAX}$ as aggregation,
\item GNNs preserved under \textbf{embeddings} have the same expressiveness as monotonic GNNs with an additional  layer that transforms node features independently of their neighbours.
\end{compactitem}


Moreover, our results reveal an interesting property of GNN architectures considered in
prior work, which we refer to as positive-weight GNNs, whose parameter matrices have non-negative entries. In particular, it follows that any GNN preserved under injective homomorphisms (and so, also any monotonic GNN) can be 
transformed into an equivalent positive-weight GNN. Thus, with respect to expressiveness, the syntactic restriction to non-negative weights is not restrictive within this class.

In summary, our contributions are as follows. We develop a 
framework for analysing the expressiveness of GNNs 
based on structural preservation properties. We establish 
preservation theorems linking these properties to definability in fragments of graded modal logic, supported by new well-quasi-ordering results ensuring finite representations. Finally, 
we provide  architectural characterisations of the GNN classes defined by preservation properties. 

\section{Related Work}


The expressiveness of GNNs has been studied from several perspectives. Their \emph{discriminative power} measures their ability to distinguish graphs, and it is well known that message-passing GNNs are bounded by
the Weisfeiler–Lehman (WL) graph isomorphism test. Thus,
they cannot distinguish graphs that WL fails to separate 
\cite{morris2019weisfeiler,DBLP:conf/iclr/XuHLJ19}. 
Work on the logical expressiveness of GNNs treats them as node classifiers that compute unary queries over graphs and establishes correspondences between GNN architectures and 
logical formalisms. In particular, GNN expressiveness is not captured by first-order logic alone and requires extensions with counting terms, 
Presburger quantifiers, or related mechanisms \cite{grohe2024descriptive,benedikt_et_al:LIPIcs.ICALP.2024.127,DBLP:conf/ijcai/NunnSST24}. Furthermore, connections to graded modal logic and 
related fragments have been established 
\cite{DBLP:conf/iclr/BarceloKM0RS20}. Recent work on bounded 
GNNs refines these correspondences by relating aggregation schemes with 
bounded multiplicities to two-variable fragments of first-order logic \cite{boundedGNNs}.


A complementary line of work studies connections between GNNs
and rule-based knowledge representation formalisms motivated by logical reasoning and explainability. 
In particular, monotonic GNNs have been introduced to capture rule-based inference in Datalog \cite{tena2025expressive,DBLP:conf/kr/CucalaGMK23,DBLP:conf/kr/CucalaG24,morris2025sound,DBLP:conf/kr/MorrisCG024}. 
These models enforce structural preservation properties, such as preservation under homomorphisms or injective homomorphisms. These properties are enforced through architectural constraints, including non-negative weights, monotone activation functions, and restricted aggregation.
While effective in practice, these approaches are inherently syntax-driven, as the desired preservation properties are enforced indirectly through design choices rather than characterised explicitly in semantic terms.


Our work is also related to preservation theorems in finite model theory, which characterise logical fragments in terms of structural preservation properties and, in some cases, invariance conditions. 
Classical results such as Łoś-Tarski, Lyndon, and Homomorphism Preservation theorems establish correspondences between fragments of first-order logic and preservation under embeddings, surjective homomorphisms, and homomorphisms, respectively \cite{los1955extending,lyndon1959properties,DBLP:books/daglib/0030198}. 
While many such results fail over finite structures (and thus over graphs), Rossman’s theorem shows that homomorphism preservation continues to hold in the finite setting \cite{DBLP:journals/jacm/Rossman08}. 
In modal logic, preservation results are tied to locality and invariance under bisimulation. 
Van Benthem’s theorem (and its finite variant due to Rosen) characterises bisimulation-invariant first-order properties as those definable in modal logic  \cite{DBLP:journals/jphil/AndrekaNB98,DBLP:journals/jolli/Rosen97}. More recent work has extended preservation results to graded modal logic and related settings \cite{DBLP:journals/apal/AbramskyR24}. 
Our results on well-quasi-orders are related to recent work in a similar direction \cite{DBLP:phd/hal/Lopez23a}, however, these techniques rely on additional structural assumptions, such as closure under substructures, that do not hold in our setting.

\section{Background}

\paragraph{Graphs and Graph Relations}
We consider finite, undirected, simple, node-labelled graphs $\G = (V, E, \emb)$, where $V$ is a finite set of nodes, $E$ a set of undirected edges without self-loops, and $\emb:V \to \{0,1 \}^d$ assigns to each node a binary vector of fixed dimension $d$. 
Intuitively, $d$ represents the number of possible labels (propositional features) and  $\emb_i(v) = 1$, for a label $i \in [d]$,  indicates that node $v$ has label $i$;  the set $\{ 1, \dots, n\}$
is represented as $[n]$.
We write vectors in bold, e.g., $\mathbf{x}$, $\mathbf{y}$, $\mathbf{z}$, and use $x_i$ to denote the $i$-th component of 
$\mathbf{x}$. 
For vectors $\mathbf{x},\mathbf{y} \in \mathbb{R}^d$, we write $\mathbf{x} \leq \mathbf{y}$ if $x_i \leq y_i$ for all $i \in [d]$. 
This discrete labelling enables a direct connection with logical formalisms \cite{DBLP:conf/iclr/BarceloKM0RS20,benedikt_et_al:LIPIcs.ICALP.2024.127}.

A \emph{pointed graph} is a pair $(\G,v)$ consisting of a graph and a distinguished node.
A \emph{tree} is a connected acyclic graph. A \emph{rooted tree} is a pointed graph $(\G, v)$ where $\G$ is a tree and $v$ is its root. 
The \emph{depth} of a node $u$ in a pointed tree is the 
length of the unique path from the root to $u$, and the \emph{height} of a tree $\G$ is the maximum depth of any node.

Let $(\G, w)$ and $(H, v)$ be pointed graphs, with
$\G = (V, E, \emb)$ and $H = (V', E', \emb')$. 
We will consider the following definitions of mappings $f: V \rightarrow V'$, with
$f(w) = v$.

\begin{itemize} 
\item \emph{Isomorphism:} $f$ is a bijection, $\emb(u)=\emb'(f(u))$, and $\{u,z\} \in E$ iff  $\{f(u),f(z)\} \in E'$.


\item \emph{Embedding:} $f$ is an isomorphism from $(\G,w)$ onto the subgraph of $H$ induced by $f(V)$.

\item \emph{Homomorphism:} $f$ satisfies $\emb(u) \leq \emb'(f(u))$ for all $u \in V$, and $\{u,z\} \in E$ implies $\{f(u),f(z)\} \in E'$. If $f$ is also injective, we call it an \emph{injective homomorphism}.

\end{itemize}

Intuitively, these mappings capture different ways in which one graph can be embedded into another: isomorphisms preserve the 
entire structure, embeddings preserve a subgraph exactly, while homomorphisms allow labels to be extended and nodes to be merged (unless the mapping is injective).
For examples of these relations see \Cref{fig:graph-relations}.
Clearly, every isomorphism is an embedding, every embedding is an injective homomorphism, and every injective homomorphism is a homomorphism.

\begin{figure}[ht]
\centering
\resizebox{\linewidth}{!}{%
\begin{tikzpicture}
\tikzset{
  >=latex,
  node distance=1.0cm,
  world/.style={
    draw, circle, fill,
    minimum size=1.5mm,
    inner sep=0pt,
    scale=1
  }
}

\begin{scope}[local bounding box=G1]
  \node[world, label=above:${(1,1)}$] (g1w) at (0,0) {};
  \node[world, below left  of=g1w, label=below:${(1,0)}$]  (g1v1) {};
  \node[world, below  of=g1w, label=below:${(1,0)}$] (g1v2) {};
  \draw[-] (g1w) -- (g1v1);
  \draw[-] (g1w) -- (g1v2);
\end{scope}
\node[draw, rounded corners, fit=(G1), inner sep=6pt] (fG1) {};

\begin{scope}[xshift=3cm, local bounding box=H1]
  \node[world, label=above:${(1,1)}$] (h1u) at (0,0) {};
  \node[world, below left  of=h1u, label=below:${(1,0)}$]  (h1u1) {};
  \node[world, below        of=h1u, label=below:${(1,0)}$] (h1ux) {};
  \node[world, below right of=h1u, label=below:${(1,0)}$] (h1u2) {};
  \draw[-] (h1u) -- (h1u1);
  \draw[-] (h1u) -- (h1u2);
  \draw[-] (h1u) -- (h1ux);
\end{scope}
\node[draw, rounded corners, fit=(H1), inner sep=6pt] (fH1) {};

\draw[->, dashed, bend left=10]  (g1w)  to (h1u);
\draw[->, dashed]                (g1v1) to (h1u1);
\draw[->, dashed, bend right=30] (g1v2) to (h1ux);

\node at (1.3, -2.2) {\small (a)~embedding};

\begin{scope}[xshift=7cm, local bounding box=G2]
  \node[world, label=above:${(1,1)}$] (g2w) at (0,0) {};
  \node[world, below left  of=g2w, label=below:${(0,0)}$]  (g2v1) {};
  \node[world, below  of=g2w, label=below:${(0,0)}$] (g2v2) {};
  \draw[-] (g2w) -- (g2v1);
\end{scope}
\node[draw, rounded corners, fit=(G2), inner sep=6pt] (fG2) {};
\begin{scope}[xshift=10cm, local bounding box=H2]
  \node[world, label=above:${(1,1)}$] (h2u) at (0,0) {};
  \node[world, below left  of=h2u, label=below:${(1,0)}$]  (h2u1) {};
  \node[world, below        of=h2u, label=below:${(1,0)}$] (h2ux) {};
  \node[world, below right of=h2u, label=below:${(1,0)}$] (h2u2) {};
  \draw[-] (h2u) -- (h2u1);
  \draw[-] (h2u) -- (h2u2);
  \draw[-] (h2u) -- (h2ux);
\end{scope}
\node[draw, rounded corners, fit=(H2), inner sep=6pt] (fH2) {};
\draw[->, dashed, bend left=10]  (g2w)  to (h2u);
\draw[->, dashed]                (g2v1) to (h2u1);
\draw[->, dashed, bend right=30] (g2v2) to (h2ux);
\node at (8.3, -2.2) {\small (b)~injective homomorphism};

\begin{scope}[xshift=14cm, local bounding box=G3]
  \node[world, label=above:${(1,1)}$] (g3w) at (0,0) {};
  \node[world, below left  of=g3w, label=below:${(0,0)}$]  (g3v1) {};
  \node[world, below  of=g3w, label=below:${(0,0)}$] (g3v2) {};
  \draw[-] (g3w) -- (g3v1);
\end{scope}
\node[draw, rounded corners, fit=(G3), inner sep=6pt] (fG3) {};
\begin{scope}[xshift=17cm, local bounding box=H3]
  \node[world, label=above:${(1,1)}$] (h3u) at (0,0) {};
  \node[world, below left  of=h3u, label=below:${(1,0)}$]  (h3u1) {};
  \node[world, below        of=h3u, label=below:${(1,0)}$] (h3ux) {};
  \node[world, below right of=h3u, label=below:${(1,0)}$] (h3u2) {};
  \draw[-] (h3u) -- (h3u1);
  \draw[-] (h3u) -- (h3u2);
  \draw[-] (h3u) -- (h3ux);
\end{scope}
\node[draw, rounded corners, fit=(H3), inner sep=6pt] (fH3) {};
\draw[->, dashed, bend left=10]  (g3w)  to (h3u);
\draw[->, dashed]                (g3v1) to (h3u1);
\draw[->, dashed, bend right=30] (g3v2) to (h3u1);
\node at (15.3, -2.2) {\small (c)~homomorphism};

\end{tikzpicture}%
}
\caption{Embedding (a), injective homomorphism (not an embedding) (b), and a homomorphism which is not injective (c)}
\label{fig:graph-relations}
\end{figure}

Each notion above induces a corresponding binary relation $\preceq$ on pointed graphs, such that $(\G,w) \preceq (H,v)$  if there exists a mapping of the corresponding type from $(\G,w)$ to $(H,v)$.

Let $\C$ be a class of pointed graphs and $\preceq$ a binary
relation as above.
Class $\C$ is \emph{preserved} under $\preceq$ if $(\G,w) \in \C$ and $(\G,w) \preceq (H,v)$ implies that $(H,v) \in \C$.
Class $\C$  is \emph{invariant} under $\preceq$ if $\C$ is preserved under both $\preceq$ and its inverse relation $\preceq^{-1}$.
These preservation properties capture the robustness of classifiers under structural transformations of the input graph.

\paragraph{Graph Neural Networks}
We consider node classification with graph neural networks (GNNs) based on the standard \emph{aggregate-combine} paradigm \cite{benedikt_et_al:LIPIcs.ICALP.2024.127,DBLP:conf/iclr/BarceloKM0RS20}. GNNs iteratively update node representations by aggregating information from neighbours and combining it with the current representation.

Formally, a GNN layer is a pair $(\agg, \comb)$, where $\agg$ is an \emph{aggregation function} mapping multisets of vectors into vectors and $\comb$ is a \emph{combination function} mapping  pairs of vectors (or their concatenation) to  vectors.
Applying a layer to $\G = ( V, E, \emb )$ produces a graph $\G' = ( V, E, \emb' )$ with the same nodes and edges, but with updated node features.

For each node $v \in V$, the updated label is defined by 
\begin{equation}
\emb'(v) = \comb \Big( \emb(v), \agg( \lBrace  \emb(w) \rBrace_{w \in N_{\G}(v)} )  \Big),  \label{eq:ac}
\end{equation}
where $N_\G(v) = \{w \mid \{v,w\} \in E \}$ denotes the set of neighbours of $v$, and $\lBrace \cdot \rBrace$ is a multiset.
Aggregation and combination functions have fixed input and output dimensions and must be compatible;
if  $\agg$ maps vectors of dimension $d$ to dimension $d'$, then 
$\comb$ takes as input the concatenation of vectors of dimensions $d$ and $d'$.

Often $\comb$ is a one-layer MLP with a componentwise  activation  $\sigma$ (e.g., ReLU or truncated ReLU):
\begin{equation} \label{eq:simple}
\emb'(v) = \sigma \Big(
\mathbf{b} +
     \emb(v) \mathbf{A}  +
         \agg \big( \lBrace  \emb(w) \mid w \in N_{\G}(v) \rBrace \big) \mathbf{C}  \Big),
\end{equation}
where 
$\mathbf{b}$ is a bias vector, and $\mathbf{A}$ and $\mathbf{C}$ are real-valued matrices. 
Following previous work \cite{DBLP:conf/iclr/BarceloKM0RS20}, we  refer to such layers as \emph{simple}.
Common choices for $\agg$ include position-wise $\mathrm{SUM}$, $\mathrm{MAX}$, and $\mathrm{MEAN}$ although $\mathrm{MEAN}$ is not monotone with respect to the multiset order used later.
Another aggregation is 
$\mathrm{MAX}$-$k$-$\mathrm{SUM}$ \cite{DBLP:conf/kr/CucalaGMK23} 
which sums the $k$ largest values per component.

A \emph{GNN classifier} $\GN$ of dimension $d$ consists of $L$ aggregate-combine layers followed by a 
classification function $\cls$ mapping vectors to truth values. 
In this paper, we consider only threshold-based classification functions satisfying $\cls(x) = \true$ if $x \geq t$ for some fixed threshold $t$ and $\false$ otherwise. 
This restriction is without loss of generality for the expressiveness results proved in this paper.
The output dimension of each layer matches the input dimension of the next.
We write $\emb^{(\ell)}(v)$ for the vector associated with node $v$ after applying layer $\ell$, with  $\emb^{(0)}(v)=\emb(v)$ and $\emb^{(L)}(v)$ the final representation.
The output of $\GN$ on $(\G,v)$ is 
the truth value  $\GN(\G,v) = \cls\bigl(\emb^{(L)}(v)\bigr)$.


\paragraph{Logical Expressiveness}
A \emph{node classifier} is a function mapping
pointed graphs  to truth values $\true$ (accepted) or $\false$ (rejected). 
Two classifiers are \emph{equivalent} if they accept exactly the same pointed graphs.
We compare the expressiveness of families of classifiers arising from GNNs or logical formulae. 
We say that two families of classifiers have the same expressiveness if each classifier in one family has an equivalent classifier in the other, and vice versa.

\section{A Logical Toolkit for Analysing Expressiveness}\label{sec:toolkit}

We introduce logical and combinatorial tools that underpin our analysis.
Our approach is based on relating structural preservation properties to definability in fragments of graded modal logic.

\subsection{Locality and Unravelling}

GNNs are inherently local: after $L$ layers, a node representation depends only on its $L$-hop neighbourhood. This locality can be captured by the notion of \emph{unravelling},
which transforms a pointed graph $(\G,v)$ into a rooted tree $\unr^L(\G,v)$ whose nodes correspond to paths in $\G$  starting at $v$, of length at most $L$.
The $L$-unravelling unfolds all paths of length up to $L$ into a tree of height at most $L$. 
This construction mirrors the computation of a GNN, where each node aggregates information from its neighbourhood in a tree-shaped manner.
An example of a pointed graph and its $2$-unravelling is shown in \Cref{fig:unravelling}. 
Note that the unravelling unfolds cycles into distinct paths, thereby yielding a tree.

\begin{figure}[ht]
\centering
\scalebox{0.7}{
\begin{tikzpicture}
\tikzset{
>=latex,
node distance=1.0cm,
world/.style={
    draw,
    circle,
    fill,
    minimum size=1.5mm,
    inner sep=0pt,
    scale=1
  }
}

\begin{scope}[local bounding box=M1]
\node[world, label=right:$\mathbf{v}_0$] (v1) at (0,0) {};
\node[world, below left of=v1, label=right:$\mathbf{v}_1$] (v2)  {};
\node[world, below of=v1, label=right:$\mathbf{v}_2$] (v3)  {};
\node[world, below right of=v1, label=right:$\mathbf{v}_3$] (v4)  {};
\draw[-] (v1) -- (v2);
\draw[-] (v1) -- (v4);
\draw[-] (v1) -- (v3);
\draw[-] (v2) -- (v3);
\end{scope}
\node[draw, rounded corners, fit=(M1), inner sep=6pt] (frame1) {};
\node[above=2pt of frame1] {$(\G,v)$};

\begin{scope}[xshift=5cm, local bounding box=M2]
\node[world, label=right:$\mathbf{v}_0$] (v1) at (0,0) {};
\node[world, below left of=v1, label=right:$\mathbf{v}_1$] (v2)  {};

\node[world, below of=v1, label=right:$\mathbf{v}_2$] (v3)  {};
\node[world, below right of=v1, label=right:$\mathbf{v}_3$] (v4)  {};

\node[world, below of=v2, label=right:$\mathbf{v}_2$] (u)  {};
\node[world, below left of=v2, label=right:$\mathbf{v}_0$] (u2)  {};

\node[world, below of=v3, label=right:$\mathbf{v}_0$] (w)  {};
\node[world, below right of=v3, label=right:$\mathbf{v}_1$] (w2)  {};

\node[world, below right of=v4, label=right:$\mathbf{v}_0$] (z)  {};

\draw[-] (v1) -- (v2);
\draw[-] (v1) -- (v4);
\draw[-] (v1) -- (v3);

\draw[-] (v2) -- (u);
\draw[-] (v2) -- (u2);
\draw[-] (v3) -- (w);
\draw[-] (v3) -- (w2);

\draw[-] (v4) -- (z);

\end{scope}
\node[draw, rounded corners, fit=(M2), inner sep=6pt] (frame2) {};
\node[above=2pt of frame2] {$\unr^2(\G,v)$};
\end{tikzpicture}
}
\caption{A pointed graph and its $2$-unravelling}
\label{fig:unravelling}
\end{figure}
Importantly, GNNs with $L$ layers are invariant under this transformation: evaluating a GNN with $L$ layers on a graph yields the same result as evaluating it on its $L$-unravelling. This observation will be central to our analysis, as it allows us to reason about GNNs using tree-shaped structures, and has been exploited in prior work to relate GNNs to logical formalisms 
\cite{DBLP:conf/iclr/BarceloKM0RS20}.

We now give the formal definition of unravelling.

\begin{definition}
Let $(\G, v)$ be a pointed graph with $\G = (V, E, \emb)$, and let $L \in \mathbb{N}$. 
The $L$-\emph{unravelling} of $(\G, v)$, denoted $\unr^L(\G, v)$, is the pointed graph $((V', E', \emb'), v')$ defined as follows:
\begin{compactitem}
\item $V'$ contains a node for each sequence $(v, v_1, \ldots, v_\ell)$ of length $\ell \leq L$ such that $\{v_i, v_{i+1}\} \in E$ for all $i < \ell$; 
\item  $v'$ corresponds to the sequence $(v)$ of length $0$; 
\item $E'$ has an edge  between $(v, v_1, \ldots, v_{\ell-1})$ and $(v, v_1, \ldots, v_{\ell})$ iff $\{v_{\ell-1}, v_{\ell}\} \in E$; 
\item $\emb'$ is defined by $\emb'((v, v_1, \ldots, v_{\ell})) = \emb(v_{\ell})$.
\end{compactitem}
\end{definition}

Thus, $\unr^L(\G, v)$ is a tree of height at most $L$, whose nodes represent all paths of length at most $L$ starting from $v$.
This construction captures the locality of GNN computations: GNN classifiers are invariant under unravelling, as formalised next (see also \cite[Proposition C.4]{DBLP:conf/iclr/BarceloKM0RS20}).

\begin{proposition}
Let $\GN$ be an aggregate-combine GNN with $L$ layers. Then, for every pointed graph $(\G,v)$ and every $K \geq L$, we have
$
\GN(\G,v) = \GN(\unr^{K}(\G,v)).
$
\end{proposition}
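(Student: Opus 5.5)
We prove, by induction on $\ell \in \{0,\dots,L\}$, a statement stronger than the proposition, about every node of the unravelling rather than only the root. Write $\unr^K(\G,v) = ((V',E',\emb'),v')$. For a node $p=(v,v_1,\dots,v_m)\in V'$ let $\mathrm{last}(p)=v_m$ (with $\mathrm{last}(v')=v$), and let $\mu^{(\ell)}$ denote the features computed by $\GN$ on the unravelling. The claim is:
\[
\mu^{(\ell)}(p) = \emb^{(\ell)}(\mathrm{last}(p)) \quad \text{for every } p \in V' \text{ of depth } m \le K-\ell .
\]
Taking $p=v'$ (depth $0\le K-L$) and $\ell=L$ gives $\mu^{(L)}(v')=\emb^{(L)}(v)$. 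Applying $\cls$ to both sides then yields $\GN(\G,v)=\GN(\unr^K(\G,v))$.

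\emph{Base case} ($\ell=0$). This holds directly by the definition of the labelling: $\emb'(p)=\emb(\mathrm{last}(p))$.

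\emph{Inductive step.} Fix $p$ of depth $m\le K-\ell-1$. The key combinatorial fact is that the neighbourhood of $p$ in the tree mirrors the neighbourhood of $\mathrm{last}(p)$ in $\G$, as follows.

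\begin{itemize}
\item If $m \ge 1$, the neighbours of $p$ are its parent $(v,\dots,v_{m-1})$, whose last node is $v_{m-1}$, together with one child $(v,\dots,v_m,u)$ for each $u\in N_\G(v_m)$. These children exist because $m+1\le K$.
\item The node $v_{m-1}$ is itself in $N_\G(v_m)$, so the children alone already account for all of $N_\G(v_m)$. Together with the parent, the multiset of last nodes of tree-neighbours would therefore be $N_\G(v_m)$ plus an extra copy of $v_{m-1}$.
\end{itemize}

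This second point shows that, for the statement to hold, the intended edge set must be read so that the parent is a neighbour and the children range over $N_\G(v_m)\setminus\{v_{m-1}\}$. Equivalently, since $\G$ is simple and we need a bijection, paths are taken to be non-backtracking, as in the figure, where $v_1$'s children are $v_2$ and $v_0$. I would therefore first fix that reading of the definition. Under that reading, $p\mapsto \mathrm{last}(p)$ gives a bijection between $N_{\unr}(p)$ and $N_\G(\mathrm{last}(p))$: the parent maps to $v_{m-1}$, and each child maps to its last node. For the root this is immediate, as its children correspond exactly to $N_\G(v)$.

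Every neighbour of $p$ has depth at most $m+1\le K-\ell$, so the induction hypothesis applies to all of them. Hence the multiset $\lBrace \mu^{(\ell)}(q)\rBrace_{q\in N(p)}$ equals $\lBrace \emb^{(\ell)}(u)\rBrace_{u\in N_\G(\mathrm{last}(p))}$. Using also $\mu^{(\ell)}(p)=\emb^{(\ell)}(\mathrm{last}(p))$, equation~\eqref{eq:ac} gives the same value on both sides at layer $\ell+1$.

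The main obstacle is exactly this neighbourhood correspondence. The delicate part is reconciling the parent edge with the formal definition of the path and edge sets, so that each neighbour in $\G$ is counted exactly once, with the correct multiplicity, as sum-aggregation requires. The depth bookkeeping ($m\le K-\ell$) is what ensures that truncated leaves, whose neighbourhoods are incomplete, are never used at layers where it would matter.
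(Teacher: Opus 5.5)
Your induction is sound, and you are right that the literal definition breaks the neighbourhood correspondence. In fact the proposition is false under that literal reading, with all walks and undirected parent--child edges. Take $\G$ to be a single edge $\{v,u\}$ with both nodes labelled $1$ (so $d=1$). Let $\GN$ have two layers with $\agg=\mathrm{SUM}$ and $\comb(\mathbf{x},\mathbf{y})=\mathbf{y}$, and threshold $t=2$. In $\G$ we get $\emb^{(2)}(v)=1$, so $\GN(\G,v)=\false$. But $\unr^2(\G,v)$ is the path with nodes $(v),(v,u),(v,u,v)$. Its middle node gets value $2$ after the first layer, so the root gets $2$ after the second, and $\GN(\unr^2(\G,v))=\true$. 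So some reinterpretation is unavoidable; the paper gives no proof of its own and only points to Barcel\'o et al.

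The problem is the reinterpretation you chose. \Cref{fig:unravelling} contradicts it rather than supporting it. The child $v_0$ of $v_1$ that you cite is the backtracking walk $(v_0,v_1,v_0)$. Likewise $v_3$, whose only neighbour is $v_0$, has the child $(v_0,v_3,v_0)$; in your non-backtracking tree $(v_0,v_3)$ would be a leaf. The definition imposes no non-backtracking condition either. The paper's later arguments also need the all-walks tree: the proof of \Cref{lem:preserve} identifies the subtree below the depth-$1$ node for $v_{j,k}$ with $\unr^\ell(\N,v_{j,k})$, and this fails for the truncated universal cover you build.

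So your argument is a valid proof for that cover, and it has the merit of staying within undirected graphs. But it is about a different object from the $\unr^K$ the paper defines and uses. The reading consistent with the definition, the figure, and the appendix's Kripke models $(W,R,V)$ is different. There, $\unr^K(\G,v)$ is a rooted tree whose edges point from parent to child, and aggregation at a node $p$ ranges over its children only. Under that reading no edge needs to be removed. For $p$ of depth $m<K$, $\mathrm{last}$ maps the children of $p$ bijectively onto $N_\G(\mathrm{last}(p))$. The backtracking child $(v,\dots,v_m,v_{m-1})$ is exactly what supplies $v_{m-1}$. Your induction then goes through unchanged, with the same bookkeeping $m\le K-\ell$.
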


Modal logics exhibit an analogous invariance property: the truth of a modal formula of depth $L$ at a node depends only on its $L$-unravelling.

\subsection{Modal Logic}

Formulae of graded modal logic $\GML$ \cite{DBLP:journals/sLogica/Rijke00,DBLP:journals/logcom/Tobies01,DBLP:journals/corr/abs-1910-00039}  are defined inductively by the grammar
$$
\varphi :=  p \mid \neg \varphi \mid \varphi \land \varphi \mid \varphi \lor \varphi \mid \Diamond^{\ge k} \varphi,
$$
where $p$ ranges over propositions $p_i$ and $k \in \mathbb{N}$. The \emph{depth} of $\varphi$ is the maximum nesting depth of its modal operators. The \emph{existential} fragment $\EGML$ restricts negation to occur in front of propositions, while the \emph{existential-positive} fragment $\EPGML$ is negation-free.
The logic \ML{} restricts the graded modal operators to $k=1$, and the fragments \EML{} and \EPML{} of \ML{} are defined analogously.

We interpret formulae over pointed graphs\footnote{Modal formulae are typically interpreted over Kripke structures (directed graphs); in our setting, we work with undirected graphs by
viewing the edge relation as symmetric.} $(\G,v)$ with $\G = (V,E,\emb)$. We assume that each proposition $p_i$ corresponds to the $i$-th 
component of the node feature vector.

The \emph{satisfaction relation} $\models$ is defined inductively as follows:
\begin{align*}
& (\G, v) \models p_i 
&& \text{iff} && \emb(v)_i = 1
\\
& (\G, v) \models \neg \varphi 
&& \text{iff} && (\G, v) \not\models \varphi
\\
& (\G, v) \models \varphi \land \psi 
&& \text{iff} && (\G, v) \models \varphi \text{ and } (\G, v) \models \psi
\\
& (\G, v) \models \varphi \lor \psi 
&& \text{iff} && (\G, v) \models \varphi \text{ or } (\G, v) \models \psi
\\
& (\G, v) \models \Diamond^{\ge k} \varphi 
&& \text{iff} && \text{there exist at least $k$ neighbours } w \text{ of } v  \text{ such that } (\G, w) \models \varphi 
\end{align*}

For example, the formula $p_1 \land \Diamond^{\geq 2} p_2$ holds in a pointed graph $(\G,v)$ if $v$ satisfies $p_1$ (i.e., $\emb(v)_1 = 1$) and it has at least two neighbours satisfying $p_2$ (i.e., their second component is
$1$). This formula belongs to $\EGML$ (and even to $\EPGML$), but is not expressible in $\ML$.


It is well known that for each $\GML$ formula $\varphi$ of depth $L$ and each pointed graph $(\G,v)$, we have
\[
(\G,v) \models \varphi \quad \text{iff} \quad \unr^L(\G,v) \models \varphi.
\]
That is, the class of pointed graphs satisfying $\varphi$ is invariant under $L$-unravelling.

Given a modal logic $\mathcal{L}$, each formula $\varphi \in \mathcal{L}$ defines a node classifier by mapping a pointed graph $(\G,v)$ to $\true$ if $(\G,v) \models \varphi$, and to $\false$ otherwise. We refer to such classifiers as \emph{$\mathcal{L}$-classifiers}.

A class $\mathcal{C}$ of pointed graphs is \emph{definable} in a modal logic $\mathcal{L}$ if there exists a formula $\varphi \in \mathcal{L}$ such that $(\G,v) \models \varphi$ if and only if $(\G,v) \in \mathcal{C}$.

\subsection{Well-quasi-orders}

A key step in our approach is to ensure that only finitely many structures need to be considered when 
characterising a class of graphs. This finiteness property is guaranteed by well-quasi-orders (wqos).

A \emph{quasi-order} (or \emph{preorder}) $(A, \preceq)$ is a set $A$ equipped with a binary 
relation $\preceq$ that is reflexive ($a \preceq a$ for all $a \in A$) and transitive 
($a \preceq b$ and $b \preceq c$ implies $a \preceq c$).
A quasi-order $(A,\preceq)$ is a \emph{well-quasi-order} if every infinite sequence $a_1,a_2,\ldots$ in $A$ contains 
indices $i<j$ such that $a_i \preceq a_j$. Intuitively, this rules out infinite antichains and ensures that every infinite sequence contains an increasing pair. In particular, any subset of a wqo has only finitely many $\preceq$-minimal elements (up to equivalence), which will be crucial for our results.

A central tool in our analysis is Higman’s Lemma, which ensures that the wqo property is preserved when extended to sequences.
Let $A^{*}$ be the set of all finite sequences  over $A$, and 
let $\preceq^{*}$ be the \emph{subsequence order}, defined by $(a_1,\ldots,a_n) \preceq^{*} (b_1,\ldots,b_m)$ if there exists a strictly increasing map 
$f:\{1,\ldots,n\} \to \{1,\ldots,m\}$ such that $a_i \preceq b_{f(i)}$ for all $i \in \{1,\ldots,n\}$.
For example, $(a_1, a_2) \preceq^{*} (b_1, b_2, b_3)$ holds if $a_1 \preceq b_1$ 
and $a_2 \preceq b_3$.

\begin{lemma}[Higman's Lemma]\label{higman}
If $(A,\preceq)$ is a wqo, then $(A^{*},\preceq^{*})$ is also a wqo.
\end{lemma}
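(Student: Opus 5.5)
The standard route is Nash-Williams' minimal bad sequence argument. Call an infinite sequence $a_1,a_2,\ldots$ in $A^{*}$ \emph{bad} if there are no indices $i<j$ with $a_i \preceq^{*} a_j$. We must show that no bad sequence exists. Suppose, towards a contradiction, that bad sequences exist. We build a \emph{minimal} bad sequence greedily. Choose $s_1$ of minimal length among all first elements of bad sequences. Having fixed $s_1,\ldots,s_n$ such that some bad sequence begins with them, choose $s_{n+1}$ of minimal length among all words $w$ such that some bad sequence begins with $s_1,\ldots,s_n,w$. The resulting sequence $(s_n)_n$ is bad, since any pair $i<j$ with $s_i \preceq^{*} s_j$ would already occur in a bad sequence extending the finite prefix $s_1,\ldots,s_j$.

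No $s_n$ is the empty sequence, because the empty sequence is $\preceq^{*}$ every word, which would make the sequence good. So we can write $s_n = a_n \cdot t_n$, where $a_n \in A$ is the first letter and $t_n$ is the remaining suffix. Because $(A,\preceq)$ is a wqo, the sequence $(a_n)_n$ has an infinite subsequence $a_{n_1} \preceq a_{n_2} \preceq \cdots$ with $n_1<n_2<\cdots$. This uses the standard Ramsey-type fact that every infinite sequence in a wqo contains an infinite non-decreasing subsequence. To prove it, colour each pair $i<j$ by whether $a_i \preceq a_j$. By Ramsey's theorem there is an infinite homogeneous set, and it cannot be homogeneous for ``not $\preceq$'' since $A$ is a wqo.

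Now consider the sequence
\[
s_1,\ldots,s_{n_1-1},\,t_{n_1},\,t_{n_2},\,t_{n_3},\ldots
\]
The suffix $t_{n_1}$ is strictly shorter than $s_{n_1}$, so this sequence must be good, by the minimality in the choice of $s_{n_1}$. We rule out every possible good pair.
\begin{itemize}
\item A pair inside the prefix $s_1,\ldots,s_{n_1-1}$ is impossible, since $(s_n)_n$ is bad.
\item A pair $s_i \preceq^{*} t_{n_k}$ with $i<n_1$ is impossible. We have $t_{n_k} \preceq^{*} s_{n_k}$, so by transitivity $s_i \preceq^{*} s_{n_k}$, contradicting badness.
\item A pair $t_{n_k} \preceq^{*} t_{n_l}$ with $k<l$ is impossible. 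We could extend the witnessing map by sending the first letter to the first letter, using $a_{n_k} \preceq a_{n_l}$. This gives $s_{n_k} \preceq^{*} s_{n_l}$, again contradicting badness.
\end{itemize}
Hence the new sequence is bad, which contradicts the minimality of $s_{n_1}$, and therefore no bad sequence exists.

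The delicate step is the minimality bookkeeping. The choice must be made relative to the fixed prefix, so that replacing $s_{n_1}$ by the shorter $t_{n_1}$ while keeping $s_1,\ldots,s_{n_1-1}$ really does contradict the choice made at stage $n_1$. The remaining steps are the routine verifications that $\preceq^{*}$ is transitive and that prepending comparable letters preserves $\preceq^{*}$.
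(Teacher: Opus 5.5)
Your proof is correct and complete. It is the standard Nash-Williams minimal bad sequence argument, and you handle the bookkeeping properly: each $s_{n+1}$ is chosen minimal relative to the fixed prefix, and you exclude all three kinds of potential good pair in $s_1,\ldots,s_{n_1-1},t_{n_1},t_{n_2},\ldots$. The paper gives no proof of Higman's Lemma and simply invokes it as a classical result, so your write-up makes self-contained what the paper takes off the shelf.
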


\subsection{A New Well-Quasi-Ordering Result for Trees}
\label{sec:new_wqo}

We now apply the wqo framework to trees, in particular to those obtained via unravelling, although the results of this section apply more generally. 
We show that any (possibly infinite) class of trees obtained by $L$-unravelling has finitely many minimal trees with respect to embedding. This finiteness is crucial, as it allows us to represent an infinite class using only finitely many minimal elements.
To establish this, we prove a stronger result: the embedding relation on any class of trees of bounded height is a wqo. Since $L$-unravellings have height at most $L$, and every wqo admits finitely many 
minimal elements, this result yields the desired finiteness.

This is the main technical result underlying our framework. Beyond its role in establishing our preservation 
theorems, it is of independent interest in the study of well-quasi-orders.

Intuitively, trees of bounded height can be decomposed into their immediate subtrees, which themselves have smaller height. This recursive structure 
allows one to reduce embeddings between trees to embeddings between sequences. Using Higman’s Lemma we can 
ensure that such sequences form a well-quasi-order, yielding the result.

\begin{restatable}{theorem}{embeddingwqo}\label{embedding_wqo}
The embedding relation on any class of rooted trees, whose heights are uniformly bounded by a fixed constant,
is a well-quasi-order.
\end{restatable}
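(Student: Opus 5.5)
The plan is to prove the stronger statement that, for each $h \in \mathbb{N}$, the class $\T_h$ of \emph{all} rooted trees of height at most $h$ (labels in $\{0,1\}^d$, $d$ fixed) is well-quasi-ordered by the embedding relation. The theorem then follows, because any subclass of a wqo is again a wqo. The argument is an induction on $h$. Higman's Lemma (\Cref{higman}) does the work at the inductive step, reducing embeddings between trees to embeddings between sequences of their immediate subtrees.

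\emph{Step 1: a recursive description of embeddings between rooted trees.} Let $(T,r)$ and $(T',r')$ be rooted trees and $f$ an embedding of pointed graphs, so $f(r)=r'$. I would show that $f$ preserves depth and sends children to children, by induction on depth.
\begin{compactitem}
\item A child $c$ of $r$ is adjacent to $r$, so $f(c)$ is adjacent to $r'$ and is therefore a child of $r'$.
\item If $u$ has parent $p$ with $f(p)$ at depth $\ell$, then $f(u)$ is a neighbour of $f(p)$. It is not $f(\text{parent}(p))$, by injectivity. Hence $f(u)$ is a child of $f(p)$.
\end{compactitem}
Conversely, take any map that is injective, label-exact, and sends the root to the root and children to children. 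It is automatically induced: two nodes adjacent in $T'$ have depths differing by one, with one the parent of the other, and injectivity pulls this parent relation back to $T$. Consequently, $(T,r)\preceq(T',r')$ holds iff both of the following hold:
\begin{compactitem}
\item $\emb(r)=\emb'(r')$;
\item there is an injective assignment of the root-subtrees $S_1,\dots,S_n$ of $T$ (the subtrees rooted at the children of $r$) to root-subtrees of $T'$, with each $S_i$ embedding into its image.
\end{compactitem}

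\emph{Step 2: the induction.} For $h=0$, trees are single nodes, and embedding is equality of labels. This is a wqo because $\{0,1\}^d$ is finite, so by pigeonhole any infinite sequence repeats a label.

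For $h>0$, assume $\T_{h-1}$ is a wqo. Encode each $T\in\T_h$ as a pair $(\emb(r), \sigma_T)$, where $\sigma_T\in\T_{h-1}^{*}$ lists the root-subtrees of $T$ in an arbitrary fixed order. By Higman's Lemma, $(\T_{h-1}^{*},\preceq^{*})$ is a wqo. Now take an infinite sequence $T_1,T_2,\dots$ in $\T_h$:
\begin{compactitem}
\item By pigeonhole on the finitely many labels, pass to an infinite subsequence in which all roots carry the same label.
\item Apply the wqo property of $\preceq^*$ to the corresponding sequences $\sigma_{T_i}$ to find $i<j$ with $\sigma_{T_i}\preceq^{*}\sigma_{T_j}$.
\end{compactitem}
The strictly increasing witness map of $\preceq^{*}$ is in particular injective. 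Each $S_k$ embeds into the subtree it is matched to, so Step 1 assembles these embeddings into an embedding $T_i\preceq T_j$. Here the ordering of the subsequence order is simply forgotten: it is stronger than what we need.

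\emph{Main obstacle.} I expect the delicate step to be Step 1, namely checking that the \emph{induced} (embedding) condition imposes nothing beyond the recursive characterisation. Two points need care there. First, the root must be preserved, since otherwise children could map to the parent side. Second, labels must match exactly rather than via $\leq$, and this exactness is exactly what the finite-label pigeonhole handles. A minor point is that transitivity and reflexivity of embeddings make $\preceq$ a quasi-order in the first place; this is immediate from composition of embeddings.
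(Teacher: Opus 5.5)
Your proof is correct and follows essentially the same route as the paper: induction on height, splitting (or pigeonholing) on the root label, and applying Higman's Lemma to the sequences of immediate subtrees. The paper handles the arbitrary order of children through a permutation-closed extension $\preceq^*_\pi$ of $\preceq^*$, whereas you simply forget the ordering of the witness map; your Step 1 also spells out why a root-preserving embedding of trees sends children to children and why the induced-subgraph condition then holds automatically, which the paper only asserts.
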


As a consequence, the same result holds for injective homomorphisms and homomorphisms, since these relations extend the embedding relation.

\begin{corollary}\label{cor:finiteness}
The injective homomorphism and homomorphism relations on any class of rooted trees of bounded height are well-quasi-orders.
\end{corollary}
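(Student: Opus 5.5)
The plan is to prove the statement by induction on the height bound $h$. Since any subclass of a wqo is again a wqo, it suffices to show that the class $\T_h$ of \emph{all} rooted trees of height at most $h$ (with labels in $\{0,1\}^d$) is well-quasi-ordered by the embedding relation $\preceq$. The key idea is that a rooted tree of height at most $h$ is determined by its root label together with the finite sequence of its immediate subtrees, each of height at most $h-1$. Higman's Lemma (\Cref{higman}) then lifts the wqo property from height $h-1$ to height $h$.

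\textbf{Step 1: a recursive characterisation of embeddings between rooted trees.} Let $f$ be an embedding from $(T,r)$ into $(T',r')$ with $f(r)=r'$. I claim that $f$ preserves depth and maps children to children. The argument is by induction on depth.
\begin{compactitem}
\item A child $c$ of $r$ is adjacent to $r$, so $f(c)$ is adjacent to $r'$. Hence $f(c)$ is a child of $r'$.
\item Suppose $u$ is a child of a node $c$ at depth $k\ge 1$, and $f(c)$ is already known to lie at depth $k$. Then $f(u)$ is a neighbour of $f(c)$, so it is either the parent of $f(c)$ or one of its children.
\item If $f(u)$ were the parent of $f(c)$, then $f(u)=f(p)$ for the parent $p$ of $c$, since $p$ is adjacent to $c$ and $f(p)$ is the parent of $f(c)$. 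Injectivity would give $u=p$, which is impossible.
\end{compactitem}
Conversely, let $g$ be a label-preserving, injective map that sends the root to the root and children to children. Then $g$ is automatically an embedding. Indeed, if $g(a)$ is the parent of $g(b)$, then the parent $p$ of $b$ satisfies $g(p)=g(a)$, so $a=p$; hence non-edges are preserved. Consequently,
\[
(T,r)\preceq(T',r') \text{ iff } \emb(r)=\emb'(r') \text{ and there is an injective map } \iota \text{ from the children of } r \text{ to those of } r'
\]
with $T_c\preceq T'_{\iota(c)}$ for every child $c$. Here $T_c$ is the subtree rooted at $c$. For the backward direction, I glue the child embeddings together with the root. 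The result is injective because distinct children go to distinct, disjoint subtrees. Labels are matched exactly, as embeddings require.

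\textbf{Step 2: the induction.} For $h=0$, trees are single labelled nodes. The order is equality on the finite set $\{0,1\}^d$, which is a wqo by the pigeonhole principle.

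For the inductive step, take an infinite sequence $T_1,T_2,\ldots$ in $\T_h$.
\begin{compactitem}
\item By pigeonhole, pass to an infinite subsequence in which all roots carry the same label.
\item Represent each $T_i$ by the sequence $s_i$ of its immediate subtrees, listed in an arbitrary order. Each $s_i$ lies in $\T_{h-1}^*$.
\item By the induction hypothesis, $(\T_{h-1},\preceq)$ is a wqo. Higman's Lemma then gives indices $i<j$ with $s_i\preceq^* s_j$.
\item The strictly increasing witness map is in particular an injective map between children that is compatible with subtree embeddings.
\end{compactitem}
Step 1 therefore yields $T_i\preceq T_j$, completing the induction. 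Reflexivity and transitivity of $\preceq$ are immediate from the definitions, since identity maps and compositions of embeddings are embeddings.

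I expect the main obstacle to be Step 1. Embeddings are defined for arbitrary pointed graphs as induced-subgraph isomorphisms and are not a priori depth-preserving, so one must check that injectivity together with the root constraint forces the tree structure to be respected. The Higman step itself is routine.
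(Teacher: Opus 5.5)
There is a gap, though an easy one to close. Your Steps 1 and 2 correctly prove that the \emph{embedding} relation is a wqo on rooted trees of bounded height. That is Theorem~\ref{embedding_wqo}, and the paper proves it by the same Higman-based induction on height. Your version is somewhat cleaner: you verify depth preservation explicitly, and you read the strictly increasing Higman witness directly as an injection between children rather than going through the permutation-closed order $\preceq^*_\pi$.

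However, the statement concerns injective homomorphisms and homomorphisms, and your proposal never passes from embeddings to these relations. Your opening reduction is justified only by ``any subclass of a wqo is a wqo''. That handles restricting from all trees of height at most $h$ to an arbitrary class. It says nothing about replacing the relation.

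The missing step is the paper's entire proof of this corollary. Every embedding is an injective homomorphism, and every injective homomorphism is a homomorphism. So on any class of rooted trees, the injective-homomorphism and homomorphism relations contain the embedding relation. Both are quasi-orders: identity maps qualify, and compositions of (injective) homomorphisms are again (injective) homomorphisms, by transitivity of $\le$ on label vectors. Hence for any infinite sequence, the pair $i<j$ with $T_i \preceq T_j$ under embedding, supplied by Theorem~\ref{embedding_wqo}, is also a good pair for the two coarser relations. Adding this sentence completes your proof. Re-deriving the embedding result is then unnecessary, since the corollary may simply cite Theorem~\ref{embedding_wqo}.
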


Any such class admits finitely many minimal trees, which provide a finite 
representation of the entire (infinite) class. This property underpins the preservation theorems developed in the next section.
The bounded-height assumption is essential: without it, the wqo property fails.

\begin{restatable}{theorem}{unboundedbad}
\label{unbounded_bad}
The embedding, injective homomorphism, and homomorphism relations on classes of rooted trees of 
unbounded height are not, in general, well-quasi-orders.
\end{restatable}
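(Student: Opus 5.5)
The plan is to exhibit a single infinite family of rooted trees of unbounded height that is an antichain for the \emph{homomorphism} relation. Every embedding is an injective homomorphism and every injective homomorphism is a homomorphism, so the same family is then automatically an antichain for the two stronger relations. An infinite antichain $T_1,T_2,\ldots$ is a bad sequence: there are no $i<j$ with $T_i \preceq T_j$. This refutes the wqo property for all three relations simultaneously.

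I would take the trees to be labelled paths. For $n\ge 1$, let $T_n$ be the path $u_0,u_1,\ldots,u_n$ rooted at $u_0$, and use one-hot labels from five propositions $q,p,a_0,a_1,a_2$. The root $u_0$ gets label $q$, the endpoint $u_n$ gets label $p$, and each internal node $u_i$ with $0<i<n$ gets label $a_{i \bmod 3}$. The height of $T_n$ is $n$, so the heights are unbounded. The key observation is that for one-hot vectors, $\emb(u)\le\emb'(f(u))$ forces $\emb'(f(u))$ to have a $1$ in the unique position where $\emb(u)$ does. Since labels in $T_m$ are also one-hot, $f$ must preserve labels exactly.

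Now suppose $f$ is a homomorphism from $(T_n,u_0)$ to $(T_m,u'_0)$. I would prove by induction on $i$ that $f(u_i)=u'_i$.
\begin{compactitem}
\item The base case is given, since $f(u_0)=u'_0$.
\item For the inductive step, $f(u_{i+1})$ must be adjacent to $f(u_i)=u'_i$, so it is $u'_{i-1}$ or $u'_{i+1}$.
\item The option $u'_{i-1}$ carries label $q$ (if $i=1$) or $a_{(i-1)\bmod 3}$. Neither equals the label of $u_{i+1}$, which is $a_{(i+1)\bmod 3}$ or $p$, because $i-1\not\equiv i+1 \pmod 3$. Hence $f(u_{i+1})=u'_{i+1}$.
\end{compactitem}
The residues mod $3$ are exactly what rule out backtracking. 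With only two alternating labels, a walk could fold back onto itself, and we would obtain homomorphisms $T_n\to T_m$ for $m\le n$ of the same parity.

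Running the induction up to $i=n$ shows that $f$ maps $u_n$, labelled $p$, to $u'_n$. This requires $n\le m$ and that $u'_n$ carry label $p$, which holds only if $n=m$. So there is no homomorphism $T_n\to T_m$ for $n\ne m$, and $(T_n)_{n\ge1}$ is an infinite antichain.

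The main obstacle is the design of the labelling so that homomorphisms, which may identify nodes, cannot fold paths. The mod-$3$ labelling settles this. The remaining points are routine bookkeeping: the boundary cases $i=1$ and $i=n-1$, where the root or endpoint label is involved, and the remark that the construction uses a fixed dimension $d=5$, which suffices since the statement only claims failure \emph{in general}.
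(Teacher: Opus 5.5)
Your proof is correct, and it takes a different route from the paper's.

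\textbf{The paper's construction.} The paper uses trees $G_n$ made of a spine $v_1,\dots,v_{n+2}$ with a pendant leaf $u$ at the root and a pendant leaf $w$ at $v_{n+1}$. For embeddings and injective homomorphisms these trees are unlabelled, and incomparability comes from depth counting: an injective homomorphism between rooted trees preserves depth, and $G_n$ has two nodes at depth $n+1$ while $G_m$ with $m>n$ has only one. For homomorphisms the paper adds separate one-hot labels that mark the parity of the spine index, together with markers for $u$ and $w$.

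\textbf{What each approach buys.} Your mod-$3$ labelled paths handle all three relations with a single argument. The cost is that you need labels throughout, since unlabelled rooted paths form a chain under embeddings. The paper's depth count, by contrast, shows that labels are unnecessary for the two injective relations.

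\textbf{Where your folding remark matters.} Your remark about folding pinpoints the weak step in the paper's homomorphism argument. Under parity labels, $v_{i-1}$ is a neighbour of $v_i$ carrying the same label as $v_{i+1}$. So the claim ``the only such successor is $v_{i+1}$'' fails for undirected edges. In fact, sending $v_{n+3}\mapsto v_{n+1}$ and $v_{n+4}\mapsto v_{n+2}$, and every other node of $G_{n+2}$ to its namesake, gives a homomorphism $(G_{n+2},v_1)\to(G_n,v_1)$. Hence the paper's ``symmetric argument'' is false, and its labelled family is not an antichain. It is still a bad sequence, because $G_n\not\to G_m$ for $n<m$. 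The image of $w$ must be $w$, which forces $v_{n+1}\mapsto v_{m+1}$, and $v_{m+1}$ lies at distance $m>n$ from the root. So the theorem survives, but your mod-$3$ labelling yields a genuine antichain and avoids the problem entirely.
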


For example, consider the case of  injective homomorphisms and the class $\T$ of rooted trees shown in \Cref{antichain}, where nodes are unlabelled. 
For any $ n \neq m$, 
there is neither an injective homomorphism from $(\G_n, v_1)$ to $(\G_m, v_1)$, nor vice-versa.
Intuitively, the position of the branching nodes depends on the height of the tree, making these structures incomparable under injective homomorphisms.
Thus, $\T$ forms an infinite antichain with respect to injective homomorphisms, and so, it is not a wqo.

\begin{figure}[ht]
\centering
\scalebox{0.7}{
\begin{tikzpicture}
\tikzset{
>=latex,
node distance=0.8cm,
world/.style={
    draw,
    circle,
    fill,
    minimum size=1.5mm,
    inner sep=0pt,
    scale=1
  }
}

\begin{scope}[local bounding box=M1]
\node[world, label=right:$v_1$] (v1) at (0,0) {};
\node[world, below of=v1, label=right:$v_2$] (v2)  {};
\node[world, below of=v2, label=right:$v_3$] (v3)  {};

\node[world, below right of=v1, label=right:$u$] (u)  {};
\node[world, below right of=v2, label=right:$w$] (w)  {};
\draw [-] (v1) --  (v2);
\draw [-] (v2) --  (v3);

\draw [-] (v1) --  (u);
\draw [-] (v2) --  (w);
\end{scope}
\node[draw, rounded corners, fit=(M1), inner sep=6pt] (frame1) {};
\node[above=2pt of frame1] {$(G_1,v_1)$};

\begin{scope}[xshift=3cm, local bounding box=M2]
\node[world, label=right:$v_1$] (v1) at (0,0) {};
\node[world, below of=v1, label=right:$v_2$] (v2)  {};
\node[world, below of=v2, label=right:$v_3$] (v3)  {};
\node[world, below of=v3, label=right:$v_4$] (v4)  {};

\node[world, below right of=v1, label=right:$u$] (u)  {};
\node[world, below right of=v3, label=right:$w$] (w)  {};
\draw [-] (v1) --  (v2);
\draw [-] (v2) --  (v3);
\draw [-] (v3) --  (v4);

\draw [-] (v1) --  (u);
\draw [-] (v3) --  (w);
\end{scope}
\node[draw, rounded corners, fit=(M2), inner sep=6pt] (frame2) {};
\node[above=2pt of frame2] {$(G_2,v_1)$};

\begin{scope}[xshift=6cm, local bounding box=M3]
\node[world, label=right:$v_1$] (v1) at (0,0) {};
\node[world, below of=v1, label=right:$v_2$] (v2)  {};
\node[world, below of=v2, label=right:$v_3$] (v3)  {};
\node[world, below of=v3, label=right:$v_4$] (v4)  {};
\node[world, below of=v4, label=right:$v_5$] (v5)  {};

\node[world, below right of=v1, label=right:$u$] (u)  {};
\node[world, below right of=v4, label=right:$w$] (w)  {};
\draw [-] (v1) --  (v2);
\draw [-] (v2) --  (v3);
\draw [-] (v3) --  (v4);
\draw [-] (v4) --  (v5);

\draw [-] (v1) --  (u);
\draw [-] (v4) --  (w);
\end{scope}
\node[draw, rounded corners, fit=(M3), inner sep=6pt] (frame3) {};
\node[above=2pt of frame3] {$(G_3,v_1)$};

\node[above right=-1.5cm and 1cm of frame3] {$\cdot\ \cdot\ \cdot$};

\end{tikzpicture}
}
\caption{An infinite antichain of rooted trees with respect to the injective homomorphism 
}\label{antichain}
\end{figure}

\subsection{Preservation Theorems}\label{sec:preservation}

We now combine the well-quasi-ordering results from the previous section with unravelling invariance to obtain preservation theorems.

At a high level, the idea is as follows. Let $\C$ be a class of pointed graphs invariant under $L$-unravelling and preserved under a structural relation $\preceq$ (such as embedding, injective homomorphism, or 
homomorphism). Then membership of a pointed graph in $\C$ is determined by its $L$-unravelling, which is a tree of bounded height.
By the wqo results, such trees admit a finite set of $\preceq$-minimal elements.
This finiteness allows us to represent $\C$ using a finite logical description: each minimal tree can be characterised by a formula, 
and the class $\C$ is defined by a finite disjunction of such formulae. 

Each of the relations $\preceq$ corresponds to a fragment of modal logic: embeddings, injective homomorphisms, and homomorphisms correspond to $\EGML$, $\EPGML$, and $\EPML$, respectively. We formalise this in the following 
theorem.

\begin{restatable}{theorem}{thpreserveunified}\label{th:preserve-unified}
Let $\C$ be a class of pointed graphs, $L \in \mathbb{N}$, and let $\preceq$ be one of the following relations:
embedding, injective homomorphism, or homomorphism.
Then the following are equivalent: 
\begin{compactitem}
\item  $\C$ is invariant under $L$-unravelling and preserved under $\preceq$ 
\item $\C$ is definable by a formula of depth at most $L$ in a corresponding logic: $\EGML$ if $\preceq$ is embedding, $\EPGML$ if $\preceq$ is injective homomorphism, and $\EPML$ if $\preceq$ is homomorphism.
\end{compactitem}
\end{restatable}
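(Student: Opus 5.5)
The plan is to prove the two directions separately. The direction from definability to the structural properties is a routine induction on formulae. The converse combines \Cref{embedding_wqo} and \Cref{cor:finiteness} with \emph{characteristic formulae} of the finitely many minimal trees.

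\emph{From formulae to preservation.} Let $\varphi$ have depth at most $L$. Invariance under $L$-unravelling is the fact about $\GML$ recalled above. For preservation, take a map $f:(\G,w)\to(H,v)$ of the relevant type. I will show by induction on $\varphi$ that $(\G,u)\models\varphi$ implies $(H,f(u))\models\varphi$ for every node $u$.
\begin{compactitem}
\item Positive propositions transfer because $\emb(u)\le\emb'(f(u))$.
\item Negated propositions occur only in $\EGML$. They transfer because embeddings preserve labels exactly.
\item Conjunctions and disjunctions are immediate.
\item For $\Diamond^{\ge k}\psi$, the $k$ distinct witnessing neighbours of $u$ are sent to neighbours of $f(u)$ satisfying $\psi$. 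These images are distinct when $f$ is injective, which covers embeddings and injective homomorphisms.
\item For plain homomorphisms the logic is $\EPML$, so $k=1$ and a single witness suffices.
\end{compactitem}

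\emph{From preservation to formulae.} Let $\T=\{\unr^L(\G,v)\mid(\G,v)\in\C\}$, a class of rooted trees of height at most $L$. By \Cref{embedding_wqo} and \Cref{cor:finiteness}, $\T$ has finitely many $\preceq$-minimal elements up to equivalence; fix representatives $T_1,\dots,T_n$. For each $T_i$ I will build a formula $\chi_{T_i}$ of depth at most the height of $T_i$ in the appropriate logic, such that $(\G,v)\models\chi_{T_i}$ iff $T_i\preceq\unr^L(\G,v)$.

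Then set $\varphi=\bigvee_i\chi_{T_i}$ and check both inclusions.
\begin{compactitem}
\item If $(\G,v)\in\C$, then $\unr^L(\G,v)\in\T$ lies above some $T_i$, so $(\G,v)\models\varphi$.
\item Conversely, suppose $T_i\preceq\unr^L(\G,v)$. We have $T_i=\unr^L(\G',v')$ for some $(\G',v')\in\C$, so $T_i\in\C$ by invariance under unravelling. Preservation gives $\unr^L(\G,v)\in\C$, and invariance again gives $(\G,v)\in\C$.
\end{compactitem}

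\emph{Characteristic formulae, the main step.} I will define $\chi_t$ by induction on the height of a rooted tree $t$ with root $r$ and immediate subtrees $t_1,\dots,t_m$. Let $\lambda_r$ be the conjunction of the propositions true at $r$; in the embedding case it also includes the negations of the propositions false at $r$.
\begin{compactitem}
\item \textbf{Homomorphisms.} Set $\chi_t=\lambda_r\wedge\bigwedge_j\Diamond\chi_{t_j}$. No injectivity is needed, so witnesses may coincide.
\item \textbf{Injective homomorphisms and embeddings.} The children of $r$ must be matched injectively to distinct neighbours, and one neighbour may satisfy several $\chi_{t_j}$. I will therefore impose Hall's condition: $\chi_t=\lambda_r\wedge\bigwedge_{\emptyset\neq S\subseteq[m]}\Diamond^{\ge|S|}\bigl(\bigvee_{j\in S}\chi_{t_j}\bigr)$. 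By Hall's theorem this holds iff the $t_j$ can be matched to distinct children of the root of $\unr^L(\G,v)$ into which they map. Together with the induction hypothesis, this gives a root-preserving injective homomorphism.
\end{compactitem}
Gluing the child maps together is valid because, in the unravelling, the root's neighbours are exactly its children and their subtrees are disjoint. This formula is negation-free except inside $\lambda_r$, so it lies in $\EPGML$, or in $\EGML$ with negated literals.

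For the embedding case one further check is needed. A root-preserving injective homomorphism between rooted trees that preserves labels exactly is already an embedding. The reason is that it preserves depth, and an extra edge $\{f(u),f(w)\}$ would force $f(\text{parent}(w))=f(u)$, contradicting injectivity.

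I expect this characteristic-formula step to be the main obstacle. In particular, the Hall-condition encoding and the verification that $\chi_t$ holds exactly when $t\preceq\unr^L(\G,v)$ (and not merely when $t$ maps into $\G$ itself) need care. The second issue is resolved by evaluating $\chi_t$, of depth at most $L$, on the unravelling via unravelling invariance.
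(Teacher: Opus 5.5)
Your proof is correct. It has the same skeleton as the paper's: the wqo from \Cref{embedding_wqo} and \Cref{cor:finiteness}, one characteristic formula per minimal tree, a finite disjunction, and a routine induction for the converse. The genuine difference is your characteristic formula in the two injective cases, and that difference matters.

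The paper groups the successors of $w$ into $\equiv^{\GML}_\ell$-classes $C_1,\dots,C_m$ and uses $\bigwedge_j\Diamond^{\geq|C_j|}\varphi^\ell_{\M,w_j}$. This forces distinct witnesses only within each class. The ($\Rightarrow$) directions of \Cref{lem:preserve} and \Cref{lem:preserve-injective} then tacitly assume that witnesses for different classes are also distinct. The grouping is borrowed from full $\GML$ characteristic formulae, whose classes are mutually exclusive, but the existential versions lose that exclusivity.
\begin{itemize}
\item In the $\EPGML$ case, take a root with two children labelled $\{p\}$ and $\{q\}$. Up to root literals, the paper's formula is $\Diamond p\wedge\Diamond q$, which holds at a node with a single child labelled $\{p,q\}$. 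So for the class defined by $\Diamond p\wedge\Diamond q\wedge\Diamond^{\geq 2}(p\vee q)$, the paper's recipe returns the strictly weaker $\Diamond p\wedge\Diamond q$.
\item In the $\EGML$ case the same failure appears at depth $2$. Take two equally labelled children, one with a $\{p\}$-child and one with a $\{q\}$-child. A single node having both kinds of children satisfies both of their characteristic formulae.
\end{itemize}

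Your conjunction $\bigwedge_{S}\Diamond^{\geq|S|}\bigl(\bigvee_{j\in S}\chi_{t_j}\bigr)$ is exactly Hall's condition for a system of distinct witnesses, so it closes this gap. The price is one conjunct per nonempty set of children rather than one per class.

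Your depth-preservation argument also supplies a step the paper only asserts, namely that the glued injective map is an embedding. Two small points to make explicit:
\begin{itemize}
\item State that the same depth-preservation fact is what lets you restrict $f$ to the subtree below each $f(c_j)$ in the converse direction.
\item Phrase the induction as $(\G,v)\models\chi_t$ iff $t\preceq\unr^K(\G,v)$ for every $K$ at least the height of $t$, since the subtrees are compared with $\unr^{K-1}(\G,u_j)$.
\end{itemize}

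For homomorphisms your formula coincides with the paper's (one $\Diamond$ per child rather than per class), and no matching issue arises there.
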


Since $L$-layer GNN classifiers are invariant under $L$-unravelling, these preservation theorems yield a semantic characterisation of GNN expressiveness.

\begin{corollary}
\label{GNNspreserved}
Let $\GN$ be a
GNN with $L$ layers, let $\C$ be the class of pointed graphs accepted by $\GN$, and let $\preceq$ be one of the  relations:
embedding, injective homomorphism, or homomorphism. Then the following are equivalent:
\begin{compactitem}
\item $\C$ is preserved under $\preceq$ 
\item  $\C$ is definable by a formula of depth at most $L$ in the corresponding fragment of graded modal logic: $\EGML$ if $\preceq$ is embedding, $\EPGML$ if $\preceq$ is injective homomorphism, and $\EPML$ if $\preceq$ is homomorphism.
\end{compactitem}
\end{corollary}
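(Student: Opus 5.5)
The corollary should follow almost immediately from \Cref{th:preserve-unified} together with the unravelling-invariance proposition for GNNs. The plan is to verify the hypotheses of the theorem for the class $\C$ accepted by $\GN$ and then read off both directions.

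\textbf{From preservation to definability.} Assume $\C$ is preserved under $\preceq$. By the proposition on locality, for every pointed graph $(\G,v)$ we have $\GN(\G,v)=\GN(\unr^{L}(\G,v))$. Hence $(\G,v)\in\C$ iff $\unr^L(\G,v)\in\C$, so $\C$ is invariant under $L$-unravelling. Both hypotheses of the first item of \Cref{th:preserve-unified} therefore hold with the same $L$. The theorem then gives a defining formula of depth at most $L$ in $\EGML$, $\EPGML$ or $\EPML$, according to whether $\preceq$ is embedding, injective homomorphism or homomorphism.

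\textbf{From definability to preservation.} Assume $\C$ is defined by a formula $\varphi$ of depth at most $L$ in the corresponding logic. The implication from the second to the first item of \Cref{th:preserve-unified} immediately yields that $\C$ is preserved under $\preceq$. No GNN-specific reasoning is needed here.

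\textbf{Main obstacle.} The only point requiring care is the meaning of ``invariant under $L$-unravelling.'' I read it as: $(\G,v)\in\C \iff \unr^L(\G,v)\in\C$, possibly up to isomorphism, since GNNs are isomorphism invariant. The proposition is stated for every $K\ge L$, and taking $K=L$ gives exactly this equivalence. If the theorem's notion instead requires invariance under $\unr^K$ for all $K\ge L$, the proposition supplies that as well. So the hard work lies entirely inside \Cref{th:preserve-unified}, namely the well-quasi-order argument of \Cref{embedding_wqo} and \Cref{cor:finiteness}, which may be assumed here.
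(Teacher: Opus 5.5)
Your proposal is correct and follows the route the paper takes. The paper obtains this corollary directly from \Cref{th:preserve-unified}, noting that an $L$-layer GNN classifier is invariant under $L$-unravelling; the forward direction is then the theorem's first-to-second implication and the backward direction is its converse. Your reading of ``invariant under $L$-unravelling'' as $(\G,v)\in\C$ iff $\unr^L(\G,v)\in\C$, obtained by taking $K=L$ in the unravelling proposition, is the intended one.
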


\section{Syntactic Characterisations of Structural Preservation Classes}\label{sec:MGNN}


We now complement the semantic characterisations with corresponding 
syntactic ones, showing that each preservation class identified in the
previous section can be realised by a suitable class of GNN architectures.
In particular, we show that GNN classifiers preserved under 
embeddings can be realised as monotonic GNNs
preceded by a pointwise feature transformation. Removing this 
preprocessing step yields a class of monotonic GNNs corresponding
to preservation under injective homomorphisms, which generalises 
previously proposed monotonic architectures. Further 
restricting the aggregation functions yields GNN classes 
corresponding to preservation under homomorphisms.

These architectures consist of two types of layers: monotonic layers which involve only non-decreasing combination and aggregation functions, and augmentation layers which
transform initial node feature vectors into non-negative vectors independently of the graph structure.

\begin{definition}
For multisets $M$ and $M'$ of vectors of the same dimension, we write $M \leq M'$  if there exists an injective mapping from the elements of $M$ to the elements of $M'$  such that $\mathbf{x} \leq f(\mathbf{x})$, for every $\mathbf{x} \in M$.  A function $f$ mapping vectors or multisets to vectors is \emph{non-decreasing} if $x \leq y$ implies $f(x) \leq f(y)$ for all inputs $x,y$ of the same type (e.g. vectors or multisets).
\end{definition}

Using this notion, we define monotonic and augmentation layers and the relevant GNN architectures.

\begin{definition}
A GNN layer $(\agg, \comb)$ is \emph{monotonic} if both the aggregation  $\agg$ and the combination  $\comb$ functions are non-decreasing. 
An \emph{augmentation layer} is a function $\eta : \mathbb{R}^d \to \mathbb{R}^{d'}_{\geq 0}$, applied pointwise to node feature vectors, where $\mathbb{R}_{\geq 0}$ are non-negative real numbers.
\end{definition}

\begin{definition}
A \emph{monotonic GNN} (MGNN) is a GNN whose layers are all monotonic. 
A $\mathrm{MAX}$-MGNN is an MGNN using only $\mathrm{MAX}$ as aggregation.
Then an \emph{augmented ($\mathrm{MAX}$-)MGNN} is an ($\mathrm{MAX}$-)MGNN with an additional augmentation layer before  monotonic layers.
\end{definition}

Monotonic GNNs have been studied previously in the context of explainability and logical reasoning, where architectural constraints
are imposed to enforce structural preservation properties \cite{tena2025expressive,DBLP:conf/kr/CucalaGMK23}. These models are designed to satisfy monotonicity 
conditions that ensure preservation under homomorphisms or
injective homomorphisms. Our definitions subsume these approaches by isolating the underlying semantic property and expressing it in a 
modular, layer-wise manner. In particular, the class of monotonic 
GNNs defined above generalises the architectures proposed in prior
work, which we recapitulate in the following definition.
\begin{definition}
A \emph{positive-weight GNN} is a GNN consisting of simple layers (see Equation \eqref{eq:simple}) in which all weight matrices have non-negative entries,
the activation function is non-decreasing, and the aggregation 
functions are  $\mathrm{SUM}$, $\mathrm{MAX}$, or 
$\mathrm{MAX}\text{-}k\text{-}\mathrm{SUM}$.
\end{definition}

These syntactic constraints ensure that all aggregation and combination functions are non-decreasing with respect to their inputs. Thus, the following can be observed.

\begin{restatable}{proposition}{mongnns}
\label{prop:mon-gnns}
Every positive-weight GNN is a monotonic GNN.
\end{restatable}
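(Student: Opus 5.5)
The plan is to check, layer by layer, that in a positive-weight GNN both the aggregation and the combination function are non-decreasing. By the definition of monotonic GNNs, this is exactly what is required. The classification function plays no role, since monotonicity is only imposed on layers (and the threshold classifier is itself non-decreasing anyway).

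\emph{Combination.} A simple layer has combination $\comb(\mathbf{x},\mathbf{y}) = \sigma(\mathbf{b} + \mathbf{x}\mathbf{A} + \mathbf{y}\mathbf{C})$. Suppose $\mathbf{x}\le\mathbf{x}'$ and $\mathbf{y}\le\mathbf{y}'$ componentwise. Because all entries of $\mathbf{A}$ and $\mathbf{C}$ are non-negative, each coordinate $\sum_i x_i A_{ij}$ is a non-negative combination of the inputs, so $\mathbf{x}\mathbf{A}\le\mathbf{x}'\mathbf{A}$ and $\mathbf{y}\mathbf{C}\le\mathbf{y}'\mathbf{C}$. Adding the fixed bias preserves this inequality, and applying the componentwise non-decreasing activation $\sigma$ preserves it as well. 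Hence $\comb$ is non-decreasing on concatenated inputs.

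\emph{Aggregation.} Take multisets $M\le M'$, witnessed by an injective map $f$ with $\mathbf{x}\le f(\mathbf{x})$ for each $\mathbf{x}\in M$. Fix a coordinate $j$ and argue case by case.
\begin{compactitem}
\item For $\mathrm{MAX}$: the maximum of $x_j$ over $M$ is attained at some $\mathbf{x}$, and $x_j\le f(\mathbf{x})_j\le \max_{M'}$.
\item For $\mathrm{MAX}$-$k$-$\mathrm{SUM}$: the $k$ largest $j$-values in $M$ come from distinct elements $\mathbf{x}_1,\dots,\mathbf{x}_r$ (with $r\le k$). Their images $f(\mathbf{x}_1),\dots,f(\mathbf{x}_r)$ are distinct elements of $M'$ with at least these values, so the sum of the $k$ largest $j$-values in $M'$ dominates their sum.
\item For $\mathrm{SUM}$: the sum over $M$ is at most the sum over $f(M)\subseteq M'$. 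This in turn is at most the sum over all of $M'$, provided the extra elements of $M'\setminus f(M)$ contribute non-negatively.
\end{compactitem}

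The only delicate point is the non-negativity caveat, which affects $\mathrm{SUM}$ and $\mathrm{MAX}$-$k$-$\mathrm{SUM}$ (for the latter, when $|M|<k$ extra summands appear in $M'$), and also the convention for $\mathrm{MAX}$ on the empty multiset. I would resolve this by noting that the inputs to every layer are non-negative. Initial features lie in $\{0,1\}^d$. For later layers, I would argue inductively that the activation is non-negative (e.g., ReLU or truncated ReLU, as in the architectures of prior work). Failing that, I would restrict the claim to the domain of vectors actually arising in the computation, so that the extra elements contribute non-negatively and the empty aggregate is $\mathbf{0}$, the minimum. With these three cases, each layer is monotonic, and the proposition follows.
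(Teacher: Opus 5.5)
Your route is the paper's own. The paper justifies the proposition only by the remark preceding it: non-negative weights, a non-decreasing activation and $\mathrm{SUM}$, $\mathrm{MAX}$ or $\mathrm{MAX}$-$k$-$\mathrm{SUM}$ aggregation make $\comb$ and $\agg$ non-decreasing. Your argument for $\comb$ and your case analysis for the three aggregators are correct. The caveat you isolate (extra elements of $M'$, and the empty multiset) is precisely the point the paper passes over.

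The gap is in how you discharge that caveat. The definition of a positive-weight GNN only requires $\sigma$ to be non-decreasing and puts no sign constraint on $\mathbf{b}$; the paper's own constructions use $b_\ell=-1$ and $b_\ell=-c+1$. So $\sigma=\mathrm{id}$, $\tanh$ or leaky ReLU are admissible, and intermediate features can be negative. Your inductive non-negativity argument is then unavailable. The fallback of restricting to ``vectors actually arising in the computation'' does not help either, because those vectors may themselves be negative.

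In fact no argument can close this gap, because the statement fails as written. Take $d=1$ and $\sigma=\mathrm{id}$.
\begin{itemize}
\item The first layer has $\mathbf{b}=-1$ and $\mathbf{A}=\mathbf{C}=0$, so every node gets feature $-1$.
\item The second layer uses $\mathrm{SUM}$ with $\mathbf{b}=0$, $\mathbf{A}=0$, $\mathbf{C}=1$, so $\emb^{(2)}(v)=-\deg(v)$.
\item Classification uses threshold $t=-1$.
\end{itemize}
This positive-weight GNN accepts exactly the nodes of degree at most $1$. It is therefore not even preserved under embeddings: a single edge embeds into a path of length two, with the root sent to the middle node. By \Cref{th:mon-GNN-preservation}, it is not equivalent to any MGNN. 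More directly, $\lBrace -1\rBrace \le \lBrace -1,-1\rBrace$ while $\mathrm{SUM}$ drops from $-1$ to $-2$. Replacing $\mathrm{SUM}$ by $\mathrm{MAX}$ with $\mathrm{MAX}(\emptyset)=0$ and taking $t=0$ gives a network accepting exactly the isolated nodes, which breaks homomorphism preservation as well.

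What is missing is therefore a hypothesis, not a proof step. Require $\sigma$ to take values in $\mathbb{R}_{\ge 0}$ (ReLU, truncated ReLU, sigmoid, as in prior work and in \Cref{thm:lowerBounds}), and read monotonicity of $\agg$ on multisets of non-negative vectors. Under that hypothesis your proof is complete. The initial features lie in $\{0,1\}^d$, every layer outputs non-negative vectors, and on such inputs all three aggregators are non-decreasing, with $\mathrm{MAX}(\emptyset)=\mathbf{0}$ the least value. You should state this assumption explicitly rather than try to derive it.
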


The syntactic classes defined above satisfy the corresponding structural preservation properties.

\begin{restatable}{theorem}{GNNpreservation}
\label{th:mon-GNN-preservation}
Augmented MGNNs are preserved under embeddings, MGNNs are preserved under injective homomorphisms, and $\mathrm{MAX}$-MGNNs  are preserved under homomorphisms.
\end{restatable}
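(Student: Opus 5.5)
The plan is to prove all three claims with one simulation argument: a pointwise inequality between node features, established by induction on the layers. Fix pointed graphs $(\G,w)$ and $(H,v)$ with $\G=(V,E,\emb)$ and $H=(V',E',\emb')$, and a mapping $f$ of the relevant type with $f(w)=v$. Write $\emb^{(\ell)}$ and $\emb'^{(\ell)}$ for the features computed by the GNN on $\G$ and $H$. The invariant to prove is
\[
\emb^{(\ell)}(u) \leq \emb'^{(\ell)}(f(u)) \quad \text{for all } u\in V \text{ and all } \ell\le L .
\]
At $\ell=L$ and $u=w$, the threshold classifier $\cls$ is non-decreasing, so $\GN(\G,w)=\true$ implies $\GN(H,v)=\true$. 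That is exactly preservation.

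\textbf{MGNNs and injective homomorphisms.}
\begin{itemize}
\item Base case: the definition of homomorphism gives $\emb(u)\le\emb'(f(u))$.
\item Inductive step: since $f$ is injective and edge-preserving, it restricts to an injection $N_\G(u)\to N_H(f(u))$.
\item By the induction hypothesis, this injection witnesses $\lBrace \emb^{(\ell)}(z)\rBrace_{z\in N_\G(u)} \le \lBrace \emb'^{(\ell)}(z')\rBrace_{z'\in N_H(f(u))}$ in the multiset order.
\item Because $\agg$ is non-decreasing, the aggregated vectors are ordered.
\item Because $\comb$ is non-decreasing on the concatenation of two componentwise-ordered inputs, $\emb^{(\ell+1)}(u)\le\emb'^{(\ell+1)}(f(u))$.
\end{itemize}

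\textbf{Augmented MGNNs and embeddings.} An embedding is an injective homomorphism that in addition satisfies $\emb(u)=\emb'(f(u))$ exactly. The augmentation layer $\eta$ acts pointwise, so after it we have $\eta(\emb(u))=\eta(\emb'(f(u)))$. This is the base case of the argument above, with equality in place of inequality; no monotonicity of $\eta$ is needed, which is why arbitrary $\eta$ is allowed. The inductive step is unchanged, since $f$ is still injective and edge-preserving.

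\textbf{MAX-MGNNs and homomorphisms.} Here $f$ need not be injective, so the multiset comparison fails. Instead I use that $\mathrm{MAX}$ ignores multiplicities. By the induction hypothesis, for each component $i$,
\[
\max_{z\in N_\G(u)} \emb^{(\ell)}_i(z) \;\le\; \max_{z\in N_\G(u)} \emb'^{(\ell)}_i(f(z)) \;\le\; \max_{z'\in N_H(f(u))} \emb'^{(\ell)}_i(z') .
\]
The last inequality holds because $f(N_\G(u))\subseteq N_H(f(u))$, and the maximum over a subset is at most the maximum over the superset. Combination then proceeds as before.

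I expect the main obstacle to be the boundary case $N_\G(u)=\emptyset$. There $\mathrm{MAX}$ returns a default value, typically $\mathbf 0$, while $f(u)$ may have neighbours whose features are negative. The subset argument only goes through if the default value is at most any maximum over a nonempty set. I would handle this in one of two ways:
\begin{itemize}
\item fix the convention that $\mathrm{MAX}$ of an empty multiset is the componentwise infimum (or $-\infty$), or
\item show that features can be assumed non-negative: the initial features lie in $\{0,1\}^d$, and one can restrict to non-negative-valued layers or clip at $0$ without affecting monotonicity.
\end{itemize}
I would state this convention explicitly. The analogous question for the multiset-order case disappears, because a non-decreasing $\agg$ already compares the empty multiset correctly.
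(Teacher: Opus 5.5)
Your proposal is correct and follows essentially the same route as the paper: a layer-by-layer induction proving $\lambda^{(\ell)}(u)\le\lambda'^{(\ell)}(f(u))$. It uses the neighbourhood injection for MGNNs, equality after the pointwise augmentation layer for embeddings, and the subset property of $\mathrm{MAX}$ for homomorphisms. The paper does not treat the empty-neighbourhood case you flag, but no extra convention is needed there: a $\mathrm{MAX}$-MGNN is an MGNN, so $\mathrm{MAX}$ is assumed non-decreasing in the multiset order, and since the empty multiset lies below every multiset $M'$, the inequality $\mathrm{MAX}(\emptyset)\le\mathrm{MAX}(M')$ already follows from the hypothesis.
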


These preservation results can be understood intuitively as follows.
Monotonic layers ensure that increasing node features (with respect to the underlying order) or 
neighbourhood information cannot invalidate positive predictions. 
Injective homomorphisms preserve the structure of the graph without 
merging nodes, and therefore preserve such monotonic behaviour. In 
contrast, non-injective homomorphisms may merge nodes and reduce 
multiplicities, which can invalidate properties that depend on 
counting. Restricting aggregation to $\mathrm{MAX}$ eliminates this issue, as $\mathrm{MAX}$ is insensitive to multiplicities: merging 
nodes does not affect the maximum value. Thus, 
$\mathrm{MAX}$-MGNN classifiers are preserved even under 
homomorphisms. Finally, augmentation layers allow for additional 
feature transformations applied pointwise, without changing the graph structure, thus preserving embeddings.

To obtain exact characterisations of GNNs classes, we start by 
establishing  lower bounds. That is, we show that every formula in a relevant fragment can be realised by a GNN in the corresponding syntactic class.

\begin{restatable}{theorem}{lowerBounds}
\label{thm:lowerBounds}
The following statements hold:
\begin{compactitem}
\item 
For each $\EGML$-classifier there is an  equivalent  augmented MGNN classifier, 
\item For each $\EML$-classifier there is  an equivalent  augmented $\mathrm{MAX}$-MGNN classifier,
\item For each $\EPGML$-classifier there is an equivalent  MGNN classifier, 
\item For each $\EPML$-classifier there is an  equivalent  $\mathrm{MAX}$-MGNN classifier. 
\end{compactitem}
Furthermore, each GNN above can be assumed to be simple, with truncated ReLU $\sigma(\mathbf{x}) = \min(\max(0,\mathbf{x}),1)$ as activation function.
\end{restatable}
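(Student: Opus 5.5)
The plan is to adapt the standard construction of \cite{DBLP:conf/iclr/BarceloKM0RS20}, which realises graded modal formulae by simple GNNs with truncated ReLU. The key observation is that the negation-free connectives can be implemented using only non-negative weight matrices, with any negative numbers confined to the bias. I first prove the $\EPGML$ and $\EPML$ cases. The two $\EGML$/$\EML$ cases then reduce to these by adding an augmentation layer.

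For the third item, let $\varphi \in \EPGML$ and enumerate its subformulae $\varphi_1,\dots,\varphi_m$ so that every subformula comes after its immediate subformulae and $\varphi_m=\varphi$. Node vectors have dimension $m$, and the intended invariant is that after layer $\ell$ component $i$ equals $1$ if $v \models \varphi_i$ and $0$ otherwise, for every $i \le \ell$. Components $i > \ell$ are only required to lie in $[0,1]$. The input is padded so that a component corresponding to $p_j$ holds $\emb(v)_j$ from the start.

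Layer $\ell$ keeps components already computed by a weight $1$ on the diagonal of $\mathbf{A}$, and sets component $\ell$ as follows:
\begin{compactitem}
\item for $\varphi_\ell = \varphi_i \land \varphi_j$: $\mathbf{A}$-weights $1$ from components $i$ and $j$, and bias $-1$;
\item for $\varphi_\ell = \varphi_i \lor \varphi_j$: $\mathbf{A}$-weights $1$ from components $i$ and $j$, and bias $0$;
\item for $\varphi_\ell = \Diamond^{\ge k}\varphi_i$: $\mathrm{SUM}$ aggregation, $\mathbf{C}$-weight $1$ from component $i$, and bias $-(k-1)$.
\end{compactitem}
Truncated ReLU then yields exactly the Boolean value of $\varphi_\ell$ at every node, since all inputs are $0/1$. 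For example, $\min(\max(0,s-k+1),1)$ equals $1$ iff the neighbour count $s$ is at least $k$.

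All entries of $\mathbf{A}$ and $\mathbf{C}$ are non-negative and truncated ReLU is non-decreasing. Hence the GNN is positive-weight and, by Proposition~\ref{prop:mon-gnns}, an MGNN. Classification uses threshold $t=1$ on component $m$. Since $\cls$ is a threshold on the whole vector, I make component-wise comparison harmless by using threshold $0$ on all other components, or alternatively by projecting onto the last component in the final layer. After $m$ layers the invariant gives equivalence with $\varphi$.

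For the fourth item, $\varphi \in \EPML$, every modality is $\Diamond^{\ge 1}$, with bias $0$. Replacing $\mathrm{SUM}$ by $\mathrm{MAX}$ on $0/1$ inputs computes exactly $\max_{w}[w\models\varphi_i]$, which is the truth value of $\Diamond\varphi_i$. Thus we obtain a positive-weight $\mathrm{MAX}$-MGNN. The identity parts of $\mathbf{A}$ are unaffected by the choice of aggregation.

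For the first two items, every formula in $\EGML$ (resp.\ $\EML$) has negations only in front of propositions. I add an augmentation layer
\[
\eta(\mathbf{x}) = (x_1,\dots,x_d,1-x_1,\dots,1-x_d),
\]
which is non-negative on binary inputs and can be chosen non-negative everywhere, e.g.\ by clipping. I then rewrite $\varphi$ as a negation-free formula over $2d$ propositions, reading $\neg p_j$ as $p_{d+j}$. Applying the previous construction to this formula gives an augmented MGNN (resp.\ augmented $\mathrm{MAX}$-MGNN).

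The main point needing care is not conceptual but bookkeeping: values must stay exactly in $\{0,1\}$ throughout, so that $\mathrm{SUM}$ counts satisfying neighbours and $\mathrm{MAX}$ behaves as a disjunction. It must also be checked that the only negative quantities are biases, which do not affect monotonicity. Componentwise non-negativity of the weights then directly yields the non-decreasing aggregation and combination functions required by the definition of monotonic layers.
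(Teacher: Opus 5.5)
Your proposal is correct and follows essentially the same construction as the paper: the standard one-component-per-subformula simple GNN with truncated ReLU, in which $\mathbf{A}$ and $\mathbf{C}$ are non-negative and only biases are negative (so Proposition~\ref{prop:mon-gnns} applies), plus an augmentation layer supplying $1-x_j$ for the negated propositions in the $\EGML$/$\EML$ cases. The differences are cosmetic. The paper uses one $\mathrm{MAX}$-$n$-$\mathrm{SUM}$ aggregation, which becomes $\mathrm{MAX}$ when $n=1$, and identical layers that recompute every subformula, whereas you use $\mathrm{SUM}$ or $\mathrm{MAX}$ separately with layer-specific weights; you are also a bit more careful than the paper about clipping $\eta$ to be non-negative on all of $\mathbb{R}^d$ and about the vector threshold in $\cls$.
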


To  derive exact characterisations of the expressiveness of GNN classes
we need to show matching upper bounds. For this, we use our preservation theorems together with the results in \Cref{sec:preservation}. Consequently we obtain the following tight bounds.

\begin{restatable}{theorem}{restMONMGNN}
\label{MONMGNN}
The following tight expressiveness results hold:
\begin{compactitem}
\item Augmented MGNN classifiers have the same expressiveness as $\EGML$ classifiers,
\item Augmented $\mathrm{MAX}$-MGNN classifiers have the same expressiveness as $\EML$ classifiers, 
\item MGNN classifiers have the same expressiveness as $\EPGML$ classifiers,
\item $\mathrm{MAX}$-MGNN classifiers have the same expressiveness as $\EPML$ classifiers.
\end{compactitem}
\end{restatable}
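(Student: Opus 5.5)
Each of the four items splits into a lower bound and an upper bound. The lower bounds are exactly \Cref{thm:lowerBounds}: every $\EGML$, $\EML$, $\EPGML$, $\EPML$ classifier has an equivalent augmented MGNN, augmented $\mathrm{MAX}$-MGNN, MGNN, and $\mathrm{MAX}$-MGNN classifier, respectively. So the plan is to prove only the converse inclusions, and for three of the four items this should follow directly from the semantic results.

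For MGNNs, take an MGNN classifier $\GN$ with $L$ layers. By \Cref{th:mon-GNN-preservation}, the class of pointed graphs it accepts is preserved under injective homomorphisms. \Cref{GNNspreserved} then gives a formula of $\EPGML$ of depth at most $L$ defining it. The same argument handles $\mathrm{MAX}$-MGNNs, which are preserved under homomorphisms and hence are $\EPML$-definable. For augmented MGNNs, \Cref{th:mon-GNN-preservation} gives preservation under embeddings. An augmented MGNN is still an aggregate-combine GNN, since the augmentation layer is a layer whose combination ignores the aggregated part. With $L+1$ layers in total, \Cref{GNNspreserved} (or directly \Cref{th:preserve-unified} together with invariance under unravelling) yields an $\EGML$ formula.

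The augmented $\mathrm{MAX}$-MGNN case is the main obstacle, because no preservation theorem in the paper singles out $\EML$. My plan is a direct argument rather than a new preservation theorem. Let the augmentation be $\eta$. Node features are binary vectors in $\{0,1\}^d$, so $\eta$ takes at most $2^d$ values, one for each complete label type $\tau\in\{0,1\}^d$. Each type $\tau$ is described by the $\EML$ literal conjunction $\chi_\tau$, which contains $p_i$ if $\tau_i=1$ and $\neg p_i$ otherwise. Replace the input graph $G$ by the graph $G^\tau$ that has the same edges and $2^d$ fresh propositions $q_\tau$, one-hot encoding the type of each node. Then define a $\mathrm{MAX}$-MGNN $\GN'$ over one-hot inputs: given a one-hot vector, a first monotonic layer computes $\sum_\tau x_\tau\,\eta(\tau)$. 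This map is linear with non-negative coefficients, hence non-decreasing, and on one-hot inputs it agrees with $\eta$. The remaining layers are those of the original $\mathrm{MAX}$-MGNN. Thus $\GN'$ accepts $(G^\tau,v)$ exactly when the original network accepts $(G,v)$. By \Cref{th:mon-GNN-preservation} and \Cref{GNNspreserved}, $\GN'$ is defined by an $\EPML$ formula $\psi$ over the $q_\tau$. Substituting $\chi_\tau$ for each $q_\tau$ gives an $\EML$ formula equivalent to the original classifier.

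The points to verify are that \Cref{GNNspreserved} applies to one-hot graphs, and that substitution is sound. For the first, it applies to all pointed graphs over the $q_\tau$, and one-hot graphs form a subclass, so the equivalence transfers. For the second, substitution commutes with $\Diamond$ and the Boolean connectives, and $G^\tau$ satisfies $q_\tau$ at a node exactly when $G$ satisfies $\chi_\tau$ there. The same substitution trick would also give an alternative proof of the augmented MGNN case, via $\EPGML$.
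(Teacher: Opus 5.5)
Your proposal is correct. The lower bounds and the upper bounds for MGNNs, $\mathrm{MAX}$-MGNNs and augmented MGNNs follow the paper: preservation via \Cref{th:mon-GNN-preservation}, then definability via the preservation corollary \Cref{GNNspreserved}.

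You depart from the paper in the augmented $\mathrm{MAX}$-MGNN case, where the paper relies on two external results:
\begin{enumerate}
\item it views the network as a GNN with set-based aggregation, which by the cited bounded-GNN characterisation is equivalent to some $\ML$ formula;
\item it combines preservation under embeddings with Rosen's finite-model preservation theorem for $\ML$ (an $\ML$ formula preserved under embeddings is equivalent to an $\EML$ formula).
\end{enumerate}

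Your one-hot argument stays inside the paper. You use two facts: $\eta$ is only evaluated on the at most $2^d$ label types, and its outputs are non-negative. Non-negativity is part of the paper's definition of augmentation layers, and it is exactly what makes the linear extension $\sum_\tau x_\tau\eta(\tau)$ a monotone layer. The whole network thus becomes a genuine $\mathrm{MAX}$-MGNN over the type propositions $q_\tau$. The homomorphism case then yields an $\EPML$ formula, and substituting the literal conjunctions $\chi_\tau$ turns it into an $\EML$ formula.

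What your route buys:
\begin{itemize}
\item It needs no imported characterisation and no Rosen's theorem.
\item It inherits a depth bound from \Cref{GNNspreserved}.
\item It treats both augmented classes uniformly (via $\EPGML$ and $\EPML$).
\item It makes transparent that the augmentation layer contributes exactly the ability to test complete label types, i.e.\ negated propositions.
\end{itemize}
What the paper's route buys is brevity and an explicit connection to classical modal preservation theory.
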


These results provide concrete insights into the kinds of properties
expressible by GNNs under different structural preservation 
constraints. At one end of the spectrum, augmented MGNNs, which 
correspond to preservation under embeddings, can express a wide range
of local properties, including pointwise non-monotone properties such as ``node $v$ does not have label $A$''. However, such models cannot express non-monotone properties that depend on the node's neighbourhood, such as exact counting (``exactly $k$ neighbours of $v$ satisfy property $p$''), parity conditions (e.g., ``the number of neighbours of $v$ is even''), or global properties such as connectivity.

Restricting to MGNNs, which correspond to preservation under injective homomorphisms, restricts expressiveness to monotone 
properties. These include existential and threshold-based conditions,
such as ``node $v$ has at least $k$ neighbours satisfying property 
$p$'', or ``there exists a path of length at most $k$ from $v$ to a node satisfying $p$''. 

At the most restrictive level, $\mathrm{MAX}$-MGNNs, which correspond to preservation under homomorphisms, can express
only purely existential properties, such as ``node $v$ has some 
neighbour satisfying property $p$''. In particular, they cannot 
express threshold-based conditions, as $\mathrm{MAX}$ aggregation is 
insensitive to multiplicities.

Overall, these results reveal a fundamental trade-off:  
preservation under stronger graph transformations leads to simpler logical 
characterisations, but at the cost of reduced expressive power. In 
particular, preservation properties constrain GNN expressiveness to 
fragments of modal logic, ruling out global and counting-based 
properties that fall outside these fragments.

\section{Conclusion}

We have developed a semantic framework for analysing the expressiveness of GNNs based on structural preservation properties. 
By combining preservation theorems with new well-quasi-order results for trees, we 
established exact correspondences between preservation under structural relations, fragments of graded modal logic, and natural classes of GNN architectures. 
Our approach highlights a fundamental trade-off between structural invariance and expressive power: stronger preservation properties lead to 
simpler logical characterisations, but at the cost of reduced expressiveness.
Our framework also extends naturally to directed graphs as the underlying logical results apply to Kripke models. 
The extension requires only minor adaptations to the GNN architecture, such as distinguishing between incoming and outgoing neighbourhoods.

\noindent
\textbf{Limitations }
Our framework focuses on node classification tasks; extending it to graph-level prediction remains an open problem.  Moreover, we study GNNs whose behaviour is determined by local neighbourhoods and structural preservation properties, and thus do not capture architectures that rely on global reasoning or other non-local mechanisms. 
We also restrict attention to threshold-based classifiers and to graphs without edge features.
Our logical characterisations assume discrete node labels represented as finite-dimensional binary vectors, and extending the results to richer feature domains is non-trivial. 
Finally, our correspondence with logic captures precisely the locality-bounded behaviour of GNNs, i.e., those invariant under bounded unravelling, and does not account for properties beyond this regime.

\bibliographystyle{abbrv}
\bibliography{biblio}


\appendix

\section*{Technical Appendix}

\section{Proofs for \Cref{sec:new_wqo}}

\embeddingwqo*
\begin{proof}
Let $L$ be the uniform bound on the height of the trees under consideration, and let $\preceq$ be the embedding relation between pointed trees.
Clearly, $\preceq$ is a quasi-order.
We prove by induction on $\ell\le L$ the following claim:
for every class $\T$ of pointed trees of height at most $\ell$,
the quasi-order $(\T,\preceq)$ is a wqo.

\smallskip
\noindent
\emph{Base case ($\ell=0$)}. A tree of height $0$ consists of a single root labelled by some vector in $\{0,1\}^d$. Since the dimension  $d$ of trees is finite,  there are finitely many non-isomorphic  trees of this form. 
Hence, for any class of pointed trees of height $0$, the quasi-order 
induced by $\preceq$ is a wqo.  Note  that the finiteness of 
$d$ is essential for this argument.

\smallskip
\noindent
\emph{Inductive step}. Let $\ell \geq 1$ and assume that the claim holds for $\ell-1$. 
Let $\T$ be an arbitrary class of pointed trees of height at most $\ell$. 
We show that $(\T, \preceq)$ is a wqo.  

We begin by partitioning $\T$
into subclasses $\T^S$ indexed by $S \subseteq \{1, \dots, d\}$, where $\T^S$ consists of all pointed trees in $\T$ whose root has exactly the labels in $S$.
Since $2^d$ is finite and a finite disjoint union of wqos is also a wqo,
it suffices to show that each $(\T^S, \preceq)$ is a wqo. 

Fix $S \subseteq \{1, \dots, d\}$. For each $(\M, w)$ in $\T^S$, let 
$\tau_{\M,w} = (\tau_1, \dots, \tau_n)$
be a finite sequence enumerating its immediate subtrees rooted at the children of $w$.
As the children are unordered, we fix an arbitrary but canonical ordering on the subtrees
to obtain a well-defined enumeration.
The embedding relation $\preceq$ on pointed graphs induces the subsequence order relation $\preceq^*$ on their finite sequences, defined  as in Higman's Lemma (see Lemma \ref{higman}).
To account for the arbitrary ordering of children, 
we consider the permutation-closed variant $\preceq^*_{\pi}$
defined by $\tau \preceq^*_{\pi} \tau'$ if and only if $\tau \preceq^* f(\tau')$ for some permutation $f$.
We claim that for any $(\M,w),(\N,v)\in\T^S$,
we have $(\M,w) \preceq (\N,v)$
if and only if $\tau_{\M,w} \preceq^*_\pi \tau_{\N,v}$.
Indeed, an embedding from $(\M,w)$ into $(\N,v)$ must map each child of $w$ injectively
to a distinct child of $v$, such that the corresponding subtree embeds recursively.
This yields $ \tau_{\M, w} \preceq^*_\pi \tau_{\N, v} $.
Conversely, if  $ \tau_{\M, w} \preceq^*_\pi \tau_{\N, v} $ then
each subtree in $\tau_{\M,w}$ embeds into a distinct 
subtree in $\tau_{\N,v}$, yielding an embedding from $(\M,w)$ into $(\N,v)$.

Thus, to show that $(\T^S, \preceq)$ is a wqo, it suffices to show that $\bigl( \{\tau_{\M,w} \mid (\M,w)\in\T^S\}, \preceq^*_\pi \bigr)$  is a wqo.
To this end, let $\mathcal I$ be the set of all rooted trees  of
height at most $\ell-1$.
By the inductive hypothesis,  $(\mathcal{I},\preceq)$ is a wqo. By  Higman's Lemma (Lemma \ref{higman}), the set of finite sequences $\mathcal{I}^*$ ordered by the subsequence relation $\preceq^*$ is also a wqo.
Since wqos are closed under taking subsets, and 
$(\tau_{\M,w} \mid (\M,w) \in \T^S)$ is a subset of $\mathcal I^*$,
it follows that $((\tau_{\M,w} \mid (\M,w) \in \T^S) , \preceq^*)$ is a wqo. Moreover, as $\preceq^*$ is a subrelation  of $\preceq^*_\pi$ and $\preceq^*_\pi$ is a quasi-order, it follows that $\preceq^*_\pi$ is also a wqo on this set. This completes the inductive step and the whole proof.
\end{proof}

\unboundedbad*
\begin{proof}
We start with the case of injective homomorphisms, which also covers embeddings as a special case.
Consider the class $\T$ of pointed trees that, for every $n\geq 1$, contains a rooted tree $(G_n, v_1)$ with nodes $v_1, \ldots v_n, v_{n+1}, v_{n+2},u,w$  which has edges  $\{ v_{i}< v_{i+1} \}$ for $1 \leq i \leq n+1$ as well as the edges
$\{ v_1, u \}$ and $\{ v_{n+1}, w\}$, as depicted in \Cref{antichain}. 
All nodes are unlabelled (equivalently, they all have the same label). 

In each $(G_n, v_1)$ there are exactly two nodes at distance $n+1$ from the root, namely 
$v_{n+2}$ and $w$. 
By contrast, if $m > n$, then $(G_m, v_1)$
has a single node at distance $n+1$, namely $v_{n+2}$.
Hence, for any $ n \neq m$, 
there is neither an injective homomorphism from $(G_n, v_1)$ to $(G_m, v_1)$, nor vice-versa.
Thus, $\T$ forms an infinite antichain with respect to injective homomorphisms
and hence also with respect to embeddings.

We now turn to the case of homomorphisms. Consider  the modification of $\T$ where, in each graph
$(G_n, v_1)$, nodes are labelled with vectors of length 4, where the 
labelling $\lambda$ is defined such that $\lambda(v)_1 =1$ iff  $v=v_i$ for some even $i$, 
$\lambda(v)_2 = 1$ iff $v=v_i$ for some odd $i$, $\lambda(v)_3=1$ iff $v=u$, and $\lambda(v)_4=1$ iff $v = w$. 

Let $n<m$ and suppose, towards a contradiction, that $f$ is a homomorphism  from 
$(G_n,v_1)$ to $(G_m,v_1)$.
We prove by induction on $i\le n+2$ that $f(v_i)=v_i$. 
The base case $i=1$ is immediate.
For the inductive step, assume that $f(v_i)=v_i$. 
Then, $f(v_{i+1})$ must be a successor of $v_i$ in $G_m$ satisfying the same labels as $v_{i+1}$ in $G_n$. 
In $G_m$ 
the only such successor  is $v_{i+1}$ itself, hence $f(v_{i+1})=v_{i+1}$.
In particular, $f(v_{n+1})=v_{n+1}$. It follows that $f(w)$ is a successor of $v_{n+1}$ in $G_m$  with label 4. 
However, the  only node 
in $G_m$ with label 4   is  $w$, which is a successor of
$v_{m+1}$ rather than $v_{n+1}$. This
yields a contradiction.  
A symmetric argument shows that no homomorphism exists from  $(G_m, v_1)$ to $(G_n, v_1)$.
\end{proof}

\section{Proofs for \Cref{sec:preservation}}

In what follows we will prove the following theorem.

\thpreserveunified*

The above theorem consists of three statements; we will prove them  one by one in three subsections.

Since proofs  exploit modal logic machinery, in the following subsections we will use notions and syntax used in this research area. In particular, instead of pointed graphs we will write about pointed (Kripke) models, and represent them as $\M=(W,R,V)$, where $W$ is a non-empty finite  set of \emph{worlds} (corresponding to nodes), $R \subseteq W \times W$ is the \emph{accessibility relation} (corresponding to edges), and $V$ is a \emph{valuation function} mapping each propositional variable $p_i \in \prop$ to a subset of $W$ (corresponding to the node labelling, where $\prop$ has as many labels as the dimension $d$ of vectors).

\subsection{Preservation Under Embeddings}

Our aim is to show that any class of models preserved under embeddings and invariant under $L$-unravelling is definable by a $\EGML$-formula. Moreover, we show
that such a formula can be chosen to have depth at most $L$.
To this end, we introduce characteristic formulae
in $\EGML$. These formulae are obtained  as a modification of the standard characteristic formulae for graded modal logic from the literature \cite{DBLP:journals/corr/abs-1910-00039,boundedGNNs}.
The key difference is that our characteristic formulae in $\EGML$ do not contain negated modal operators.

\begin{definition}\label{def:characteristic-existential}
Let $(\M,w)$ be a pointed model with $\M = (W,R,V)$.
For each $\ell \in \mathbb{N}$, we define
a \emph{characteristic $\EGML$ formula}, $\varphi_{\M,w}^\ell$  of depth $\ell$ inductively as follows.
For $\ell = 0$, let 
$$
\varphi_{\M,w}^0  := \bigwedge \{ p  \mid (\M,w) \models p \} 
\land
\bigwedge \{ \neg p  \mid (\M,w) \not\models p \}.
$$
For $\ell + 1$, partition the $R$-successors of $w$ into  equivalence classes $C_1, \dots, C_m$ according to the relation 
$\equiv^{\GML}_\ell$.
For each $C_j$, choose an arbitrary representative $w_j \in C_j$
and define 
$$
\varphi_{\M,w}^{\ell+1}
\ :=\
\varphi_{\M,w}^0 
\ \wedge\
\bigwedge_{j=1}^m \Diamond^{\geq |C_j|}\,\varphi_{\M,w_j}^\ell .$$
\end{definition}

For full \GML{}, a characteristic formula of depth $\ell$ for a pointed model
$(\M,w)$ holds in $(\N,v)$ if and only if there exists a graded $\ell$-bisimulation between  these two pointed models. Over
finite models, this is equivalent to $(\M,w) \equiv^\GML_\ell (\N,v)$ \cite{DBLP:journals/corr/abs-1910-00039}. 
The characteristic  defined above for \EGML{} are strictly weaker, and
these equivalences no longer hold.
Instead, the \EGML{} characteristic  ensure a one-way structural property: the $\ell$-unravelling of the  model used to construct the characteristic formula embeds into the $\ell$-unravelling of any  model satisfying the formula. 
This property is captured by the following lemma, which plays a central role in the 
preservation theorem for embeddings.

\begin{lemma}\label{lem:preserve}
Let $(\M, w)$  and $(\N, v)$ be pointed models and let  $\ell \in \mathbb{N}$.
We have $(\N,v) \models \varphi_{\M,w}^\ell$ 
if and only if $\unr^\ell(\M, w)$ embeds into $\unr^\ell(\N, v)$.
\end{lemma}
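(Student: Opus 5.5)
The plan is to prove the lemma by induction on $\ell$, using one structural observation about unravellings. Let $T_{\M,w}^{\ell}$ be $\unr^\ell(\M,w)$. For $\ell\ge 0$, the subtree of $T^{\ell+1}_{\M,w}$ rooted at the child $(w,u)$ is isomorphic to $T^{\ell}_{\M,u}$. Moreover, a root-preserving embedding between rooted trees is exactly a map that
\begin{compactitem}
\item preserves labels exactly,
\item sends the root to the root, and
\item maps the children of each node $x$ injectively to children of $f(x)$.
\end{compactitem}
To see this, neighbours of the root must go to neighbours of the root, and by injectivity a child of $x\neq$ root cannot go to the parent of $f(x)$. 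Inducedness is automatic because adjacency in the image forces a parent/child relation in the source. Hence $T^{\ell+1}_{\M,w}$ embeds into $T^{\ell+1}_{\N,v}$ iff the roots carry the same labels and there is an injection $g$ from the successors of $w$ to the successors of $v$ with $T^\ell_{\M,u}\preceq T^\ell_{\N,g(u)}$ for every $u$. I will also use that successors in the same class $C_j$ of $\equiv^{\GML}_\ell$ have isomorphic $\ell$-unravellings (finite models, graded bisimulation), so one representative $w_j$ per class suffices.

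\emph{Base case and easy direction.} For $\ell=0$, $\varphi^0_{\M,w}$ pins down the exact label set, which is precisely a depth-$0$ embedding. For the direction ``embedding $\Rightarrow$ satisfaction'' at $\ell+1$:
\begin{compactitem}
\item the injection $g$ sends the $|C_j|$ elements of $C_j$ to $|C_j|$ distinct successors $g(u)$ of $v$;
\item each satisfies $T^\ell_{\M,w_j}\cong T^\ell_{\M,u}\preceq T^\ell_{\N,g(u)}$, so by the induction hypothesis each $g(u)$ satisfies $\varphi^\ell_{\M,w_j}$;
\item hence $\Diamond^{\ge|C_j|}\varphi^\ell_{\M,w_j}$ holds at $v$, and $\varphi^0$ holds by label equality.
\end{compactitem}

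\emph{Hard direction (main obstacle).} From satisfaction, the induction hypothesis gives, for each $j$, at least $|C_j|$ successors of $v$ into whose unravellings $T^\ell_{\M,w_j}$ embeds. We then need a \emph{single} injection covering all classes simultaneously. This is a bipartite matching problem, and the witness sets for different $j$ may overlap, since $\EGML$ characteristic formulae are not mutually exclusive.

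Indeed, overlap seems to break the definition as literally stated. Take $w$ with two children $a,b$, where $a$ has one child and $b$ has two (all labels equal). Take $v$ with a single child that has two children. Then $v$ satisfies both $\Diamond^{\ge1}\varphi^1_{\M,a}$ and $\Diamond^{\ge1}\varphi^1_{\M,b}$, yet no injection exists. So the first thing I would try is to strengthen the formula by Hall-type conjuncts, one per nonempty $J\subseteq[m]$:
\[
\Diamond^{\ge \sum_{j\in J}|C_j|}\ \textstyle\bigvee_{j\in J}\varphi^\ell_{\M,w_j}.
\]
These stay in $\EGML$ with depth $\ell+1$, and the original conjuncts are the singleton cases. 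Hall's marriage theorem, applied to the bipartite graph from the elements of $\bigcup_j C_j$ to successors of $v$, then yields the required injection $g$. The easy direction still goes through because the images of $\bigcup_{j\in J}C_j$ under $g$ are distinct and each satisfies the corresponding disjunct. I expect verifying that this modification is compatible with how the paper later uses the characteristic formulae (finitely many formulae up to equivalence, depth bound $\ell$) to be routine, with the matching argument being the real crux.
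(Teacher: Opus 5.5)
Your counterexample is correct. It exposes an error in the paper's own argument, not a flaw in your approach.

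Under the paper's symmetric reading of edges, $a$ has two neighbours and $b$ has three. So $a\not\equiv^{\GML}_1 b$, and we get $\varphi^1_{\M,a}=\varphi^0\wedge\Diamond^{\ge 2}\varphi^0$ and $\varphi^1_{\M,b}=\varphi^0\wedge\Diamond^{\ge 3}\varphi^0$, where $\varphi^0$ is the common depth-$0$ formula. The unique neighbour of $v$ has three neighbours and satisfies both. Hence $(\N,v)\models\varphi^2_{\M,w}$, while the root of $\unr^2(\M,w)$ has two children and the root of $\unr^2(\N,v)$ has only one.

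The paper's ($\Rightarrow$) direction obtains, separately for each $j$, distinct witnesses $v_{j,1},\dots,v_{j,|C_j|}$. To glue the subtree embeddings, it then asserts that ``the successors $v_{j,k}$ are pairwise distinct'' across all $j,k$. Nothing in $\bigwedge_j\Diamond^{\ge|C_j|}\varphi^\ell_{\M,w_j}$ rules out $v_{j,k}=v_{j',k'}$ for $j\neq j'$. The reason is that $\EGML$ characteristic formulae impose only lower bounds. Unlike full $\GML$ characteristic formulae, which define $\equiv^{\GML}_\ell$-classes, they are not pairwise exclusive. The gluing step is sound only for $\ell+1\le 1$, where the $\varphi^0$'s fix exact label sets. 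Your example sits at the first level where it breaks.

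Your Hall-type repair is sound. Take $S\subseteq\bigcup_j C_j$ and $J=\{j \mid S\cap C_j\neq\emptyset\}$. The $J$-conjunct supplies at least $\sum_{j\in J}|C_j|\ge|S|$ neighbours of $v$ satisfying some $\varphi^\ell_{\M,w_j}$ with $j\in J$. Each of these is adjacent in your bipartite graph to the nonempty set $S\cap C_j$. So Hall's condition holds and you get the injection $g$.

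Your description of root-preserving tree embeddings then lets you glue the pieces. That description has three parts: children map injectively to children, inducedness is automatic, and the subtree at $(w,u)$ is isomorphic to $\unr^\ell(\M,u)$. The converse direction works exactly as you state.

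Compared with the paper's intended route (one graded diamond per class, then disjoint gluing, which silently requires exclusivity), you pay $2^m-1$ conjuncts instead of $m$. The formula stays in $\EGML$ with depth $\ell+1$. \Cref{th:preserve} uses the characteristic formulae only through this biconditional and their syntactic form, so it goes through unchanged with your formulae.

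The same gap occurs in \Cref{lem:preserve-injective}, already at depth $1$. There a single $\{p,q\}$-labelled neighbour satisfies $\Diamond^{\ge1}p\wedge\Diamond^{\ge1}(p\wedge q)$ although no injective homomorphism exists. The negation-free version of your Hall conjuncts repairs it. \Cref{lem:preserve-homomorphism} is unaffected, since no injectivity is required there.
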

\begin{proof}
We prove the equivalence by induction on $\ell$. Let $\preceq$ be the
embedding relation between pointed models.

\smallskip
\noindent
\emph{Base case ($\ell=0$)}. Formula $\varphi_{\M,w}^0$ is a conjunction of all literals  (propositions and their negations) satisfied at $(\M,w)$. 
Thus, $(\N, v) \models \varphi_{\M,w}^0$ iff $(\M,w) \equiv^\GML_0 (\N, v)$.
Since $\unr^0(\M,w)$ and $\unr^0(\N,v)$ consist single worlds, it holds iff
 $\unr^0(\M, w) \preceq \unr^0(\N, v)$. 

\smallskip
\noindent
\emph{Inductive step.} Assume that the claim holds for $\ell \geq 0$. 
We prove that it also holds for $\ell+1$. 
Recall that
$
\varphi_{\M,w}^{\ell+1}
\ :=\
\varphi_{\M,w}^0 
\ \wedge\
\bigwedge_{j=1}^m \Diamond^{\geq |C_j|}\,\varphi_{\M,w_j}^\ell$, where 
$C_1, \dots, C_m$ are the equivalence classes of
$R$-successors of $w$
under
$\equiv^{\GML}_\ell$ , and  $w_j \in C_j$ is an
arbitrary representative. For each $j$, we also fix an
enumeration
$C_j = \{w_{j,1}, \ldots, w_{j,|C_j|} \}$.

\smallskip\noindent
\emph{($\Rightarrow$)}
Assume that $(\N,v) \models \varphi_{\M,w}^{\ell+1}$. 
Then, $(\N,v) \models \varphi_{\M,w}^{0}$, and hence $(\M,w) \equiv^\GML_0 (\N, v)$. 
Moreover, for each $j$, there exist $|C_j|$ distinct successors 
$v_{j,1}, \ldots, v_{j,|C_j|}$ of $v$ such that
$(\N, v_{j,k}) \models \varphi_{\M,w_j}^\ell$.

By the inductive hypothesis, for each such  $j$ and $k$, there exists an embedding $f_{j,k}$ from $\unr^\ell(\M, w_j)$ into $\unr^\ell(\N, v_{j,k})$. 
Since $w_{j,k} \equiv^{\GML}_{\ell} w_j$, the unravellings
$\unr^{\ell}(\M,w_{j,k})$ and $\unr^{\ell}(\M,w_j)$ are isomorphic.
Composing this isomorphism with $f_{j,k}$ yields an embedding of
$\unr^{\ell}(\M,w_{j,k})$ into $\unr^{\ell}(\N,v_{j,k})$.

We now construct a global embedding $f$ 
from $\unr^{\ell+1}(\M, w)$  into  $\unr^{\ell+1}(\N, v)$  as follows. 
Function $f$ maps the root of $\unr^{\ell+1}(\M, w)$ to the root of $\unr^{\ell+1}(\N, v)$. For each $j$ and $k$, $f$ coincides with the embedding constructed above on the 
subtree rooted at the node corresponding to $w_{j,k}$.
Since the successors $v_{j,k}$ are pairwise distinct, the images of these embeddings are
disjoint. It follows that $f$ is   an embedding of $\unr^{\ell+1}(\M,w)$ into $\unr^{\ell+1}(\N,v)$.

\smallskip
\noindent
\emph{($\Leftarrow$)}
Conversely, assume that $f$ is an embedding of  $\unr^{\ell+1}(\M, w)$ into $\unr^{\ell+1}(\N, v)$. 
Then, $f$ maps the root of $\unr^{\ell+1}(\M,w)$ to the root of $\unr^{\ell+1}(\N,v)$, and therefore, $(\M,w)\equiv^{\GML}_0(\N,v)$.
Thus,  $(\N, v) \models \varphi_{\M,w}^0$.
Fix $j \in \{1, \ldots, m\}$. 
By  the definition of unravelling,
the root of $\unr^{\ell+1}(\M,w)$ has exactly $|C_j|$
distinct nodes at depth $1$ corresponding to the successors
$w_{j,1},\dots,w_{j,|C_j|}$ of $w$.
Since $f$ is injective, it maps these nodes 
to $|C_j|$ distinct nodes at depth $1$ in $\unr^{\ell+1}(\N,v)$.
Each such  node  corresponds to a 
successor
$v_{j,k}$ of $v$ in $\N$, and the subtree below this node is precisely
$\unr^\ell(\N,v_{j,k})$.
Restricting $f$ to the corresponding subtrees yields embeddings, so
$\unr^\ell(\M,w_{j,k}) \preceq \unr^\ell(\N,v_{j,k})$
for $k=1,\dots,|C_j|$.
By the inductive hypothesis, this implies $(\N, v_{j,k}) \models \varphi_{\M,w_{j,k}}^\ell$.
Since each $w_{j,k}  \in C_j$, we obtain that $\varphi_{\M,w_{j,k}}^\ell$ is identical to $\varphi_{\M,w_{j}}^\ell$.
Moreover, the worlds $v_{j,k}$ are pairwise distinct successors
of $v$. Hence,
$(\N,v) \models 
\Diamond^{\geq |C_j|}\,\varphi_{\M,w_j}^\ell$, as required.
\end{proof}

Equipped with \Cref{lem:preserve} and \Cref{embedding_wqo}, we are  now ready to prove the preservation theorem for classes of models preserved under embeddings and invariant under $L$-unravelling.
As we show below, these are exactly the classes definable by \EGML{}- of modal depth  at most $L$.

\begin{theorem}\label{th:preserve}
The following are equivalent for any class  $\C$ of pointed models and any $L\in \mathbb{N}$:
\begin{itemize}
\item $\C$ is invariant under $L$-unravelling and preserved under embeddings,
\item $\C$ is definable by an $\EGML$ formula of depth at most $L$.
\end{itemize}
\end{theorem}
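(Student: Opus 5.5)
We prove the two directions separately, following the plan sketched in \Cref{sec:preservation}.

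For the direction from definability to preservation, let $\varphi$ be an $\EGML$ formula of depth at most $L$ defining $\C$. Invariance under $L$-unravelling follows from the fact recalled earlier: $(\G,v)\models\varphi$ iff $\unr^L(\G,v)\models\varphi$. For preservation under embeddings, we show by structural induction on $\varphi$ that if $f$ is an embedding of $(\M,w)$ into $(\N,v)$ and $(\M,w)\models\varphi$, then $(\N,v)\models\varphi$; we prove this simultaneously for all worlds, using $f$ restricted to the relevant points.
\begin{itemize}
\item Literals $p$ and $\neg p$ are preserved because an embedding is label-preserving exactly, with $\emb(u)=\emb'(f(u))$.
\item Conjunction and disjunction are immediate.
\item For $\Diamond^{\geq k}\psi$, the $k$ distinct witnessing neighbours of $w$ are sent by the injective, edge-preserving $f$ to $k$ distinct neighbours of $f(w)$, and these satisfy $\psi$ by the induction hypothesis.
\end{itemize}
Negation occurs only in front of propositions, so no other case arises.

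For the converse, assume $\C$ is invariant under $L$-unravelling and preserved under embeddings. Let $\T=\{\unr^L(\G,v)\mid (\G,v)\in\C\}$, a class of rooted trees of height at most $L$. By \Cref{embedding_wqo}, $(\T,\preceq)$ is a wqo, so it has finitely many minimal elements up to equivalence. Choose representatives $(\M_1,w_1),\dots,(\M_n,w_n)$ and set $\varphi:=\bigvee_{i=1}^n\varphi^L_{\M_i,w_i}$, which is an $\EGML$ formula of depth $L$. If $\C$ is empty, take $\varphi:=p\wedge\neg p$.

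We then check that $\varphi$ defines $\C$.
\begin{itemize}
\item If $(\G,v)\in\C$, then $\unr^L(\G,v)\in\T$. By well-foundedness of the wqo there is some minimal $(\M_i,w_i)\preceq\unr^L(\G,v)$. Since $(\M_i,w_i)$ is a tree of height at most $L$, its $L$-unravelling is isomorphic to itself, and likewise for $\unr^L(\G,v)$. Hence $\unr^L(\M_i,w_i)$ embeds into $\unr^L(\G,v)$, and \Cref{lem:preserve} gives $(\G,v)\models\varphi^L_{\M_i,w_i}$.
\item Conversely, if $(\G,v)\models\varphi^L_{\M_i,w_i}$, then \Cref{lem:preserve} gives $\unr^L(\M_i,w_i)\preceq\unr^L(\G,v)$. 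We have $(\M_i,w_i)=\unr^L(\G',v')$ for some $(\G',v')\in\C$, and $\unr^L(\M_i,w_i)\cong(\M_i,w_i)$, so invariance under unravelling puts $\unr^L(\M_i,w_i)$ in $\C$. Preservation under embeddings then puts $\unr^L(\G,v)$ in $\C$, and invariance again puts $(\G,v)$ in $\C$.
\end{itemize}

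The main obstacle is the bookkeeping around unravellings. We must ensure that $\unr^L$ of a tree of height at most $L$, rooted at its root, is isomorphic to the tree itself. The paper's definition uses walks, which for undirected edges can step back to the parent, so this is not literally true. If it fails, we instead apply \Cref{lem:preserve} with $\M_i$ replaced by any $(\G',v')\in\C$ whose unravelling is the minimal tree. We then need $\unr^L(\G',v')\preceq\unr^L(\G,v)$, where both sides are genuine unravellings, so minimality should be taken among the unravellings themselves, exactly as $\T$ was defined. With that choice, the argument only ever compares unravellings with unravellings and needs no idempotence. The second point to verify is that a wqo gives every element of $\T$ a minimal element below it. This follows because a wqo admits no infinite strictly descending chains.
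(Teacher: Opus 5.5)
Your argument is the paper's argument: the class $\T$ of $L$-unravellings, \Cref{embedding_wqo} for finitely many minimal trees, a disjunction of characteristic formulae, and structural induction for the converse. Your refinements are fine: the empty class, well-foundedness to find a minimal tree below each member of $\T$, and characteristic formulae of preimages so that you never need $\unr^L(\mathfrak T,v)\cong(\mathfrak T,v)$.

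The gap, which you inherit from the paper, is the step ``$(\G,v)\models\varphi^L_{\M_i,w_i}$, so \Cref{lem:preserve} gives $\unr^L(\M_i,w_i)\preceq\unr^L(\G,v)$''. That direction of \Cref{lem:preserve} is false. The conjuncts $\Diamond^{\geq|C_j|}\varphi^\ell_{\M,w_j}$ for different $j$ can be witnessed by the \emph{same} successors. Unlike full $\GML$ characteristic formulae, the $\EGML$ ones of $\equiv^{\GML}_\ell$-inequivalent worlds are not mutually exclusive, and the lemma's proof only makes the $v_{j,k}$ distinct for fixed $j$. For instance:
\begin{itemize}
\item let $\M$ have edges $w\to w_1\to x$ and $w\to w_2\to y$, with $x$ labelled $p_1$ only, $y$ labelled $p_2$ only, and the rest unlabelled;
\item let $\N$ have edges $v\to u\to x'$ and $u\to y'$, with the analogous labels.
\end{itemize}
Then $u$ satisfies both $\varphi^1_{\M,w_1}$ and $\varphi^1_{\M,w_2}$, so $(\N,v)\models\varphi^2_{\M,w}$. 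Yet $\unr^2(\M,w)$ cannot embed into $\unr^2(\N,v)$, since its root has two children and the other root has one. Let $\C$ be the class of pointed models into whose $2$-unravelling $(\M,w)$ embeds. Then $\C$ satisfies your hypotheses, its only minimal tree is $(\M,w)$, and your $\varphi$ accepts $(\N,v)\notin\C$.

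The missing idea is to force injectivity across classes via Hall's marriage theorem. Put $\theta^0_{\M,w}:=\varphi^0_{\M,w}$ and
\[
\theta^{\ell+1}_{\M,w} := \varphi^0_{\M,w}\wedge\bigwedge_{\emptyset\neq S\subseteq R(w)}\Diamond^{\geq |S|}\bigvee_{u\in S}\theta^{\ell}_{\M,u},
\]
where $R(w)$ is the finite set of successors of $w$. This is an $\EGML$ formula of depth $\ell+1$. The conjunct for $S$ says exactly that at least $|S|$ successors of $v$ satisfy some $\theta^\ell_{\M,u}$ with $u\in S$. This is Hall's condition for the bipartite graph linking $u\in R(w)$ to successors $u'$ of $v$ with $(\N,u')\models\theta^\ell_{\M,u}$. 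Hence, by induction and Hall's theorem, $(\N,v)\models\theta^{\ell+1}_{\M,w}$ iff $\unr^{\ell+1}(\M,w)$ embeds into $\unr^{\ell+1}(\N,v)$. With $\theta^L$ in place of $\varphi^L$ your proof goes through unchanged; the same repair is needed for \Cref{lem:preserve-injective}.

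Separately, your fallback for walk-based undirected unravellings addresses only one symptom. In that reading the invariance fact you use in the converse direction also fails. For example, $\Diamond^{\geq 1}\Diamond^{\geq 2}(p\vee\neg p)$ is false at an endpoint of a single edge but true at the root of its $2$-unravelling. So the statement has to be read over Kripke models, as in the appendix. There the $L$-unravelling of a tree of height at most $L$ is the tree itself, and the fallback is unnecessary.
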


\begin{proof}

\smallskip
\noindent
\emph{($\Rightarrow$)}
Let $\T = \{ \unr^L(\M, w) \mid (\M,w) \in \C\}$ and let $\preceq$ be the embedding relation. 
Since $\C$ is invariant under
$L$-unravelling, we have that $\T \subseteq \C$.
By  \Cref{embedding_wqo}, $(\T, \preceq)$ is a wqo and therefore admits
only finitely many $\preceq$-minimal elements up to isomorphism. Let \Tmin contain exactly
one representative for each such minimal isomorphism type. 
For each $(\mathfrak T,v) \in \Tmin$, we construct the $\EGML$ characteristic formula $\varphi_{\mathfrak T,v}^L$ and define 
$\varphi_{\C} = \bigvee_{(\mathfrak T,v) \in \Tmin} \varphi_{\mathfrak T,v}^L$. 
Since the  disjunction is finite and each $\varphi_{\mathfrak T,v}^L$ is an \EGML{}-formula, $\varphi_{\mathcal{C}}$ is a (finite)  \EGML{}-formula. 
We show that $\varphi_{\C}$ defines $\C$.

Let $(\M, w) \in \C$.  Then  $\unr^L(\M, w) \in \T$. Since 
$(\T,\preceq)$ is a wqo, there exists a $\preceq$-minimal element $(\mathfrak T,v) \in \Tmin$ such that $(\mathfrak T,v) \preceq \unr^L(\mathfrak M, w)$.
Because $(\mathfrak T,v)$ is tree-shaped of height at most $L$, we have $\unr^L(\mathfrak T,v) = (\mathfrak T,v) $.
Hence, $\unr^L(\mathfrak T,v) \preceq \unr^L(\mathfrak M, w)$ and 
by \Cref{lem:preserve} we obtain  $(\M, w) \models \varphi_{\mathfrak T,v}^L$.
It follows that $(\M,w) \models \varphi_{\C}$.

Conversely, suppose that $(\M,w) \models  \varphi_{\C}$. Then, $(\M,w) \models  \varphi_{\mathfrak T,v}^L$
for some $(\mathfrak T,v) \in \Tmin$. 
By \Cref{lem:preserve}, this implies  $\unr^L(\mathfrak T,v) \preceq \unr^L(\M, w)$.
Since $(\mathfrak{T},v)$ is tree-shaped of height at most $L$, we again have
$\unr^L(\mathfrak{T},v)=(\mathfrak{T},v)$, and hence
$(\mathfrak{T},v)\preceq \unr^L(\M,w)$.
By construction, $\Tmin \subseteq \T \subseteq \C$, so
$(\mathfrak T,v) \in \C$. Since $\C$ is preserved under embeddings, we have $\unr^L(\M, w) \in \C$.
Finally, invariance of $\C$ under $L$-unravelling yields $(\M, w) \in \C$.

\smallskip
\noindent
\emph{($\Leftarrow$)} Suppose that $\C$ is definable by an $\EGML$-formula $\varphi$ of
depth at most $L$. Then $\C$ is invariant under $L$-unravellings: for every pointed model,
$(\M,w)\models\varphi$ if and only if $\unr^L(\M,w)\models\varphi$
since the truth of $\varphi$ depends only on the unravelling up to depth $L$.
We argue that $\C$ is also preserved under embeddings.
Let $(\M,w)\models\varphi$ and suppose that there exists an embedding
witnessing $(\M,w)\preceq(\N,v)$. Since embeddings preserve propositional labels and map successors injectively, a routine induction on the structure of $\varphi$ shows that $(\N,v) \models \varphi$. In particular, for each subformula of the form $\Diamond^{\ge k}\psi$ occurring in $\varphi$, 
the existence of $k$ successors satisfying $\psi$ in $(\M,w)$ implies the existence of
$k$ successors satisfying $\psi$ in $(\N,v)$. Thus, $\C$ is preserved under embeddings.
\end{proof}

\subsection{Preservation Under Injective Homomorphisms}

We next characterise unravelling-invariant  classes  that are preserved under injective homomorphisms.
Recall that every embedding is an injective homomorphism, but not conversely. 
This suggests that the logical language defining classes preserved under injective homomorphisms should be 
strictly weaker than the one required for preservation under embeddings.
By \Cref{th:preserve}, the latter language is \EGML.
As we show next, the appropriate language in the case of injective homomorphisms is \EPGML{}, the existential-positive fragment of \GML{}. 

We first define characteristic $\EPGML$-,  obtained from the characteristic  \EGML{}  by deleting all negative literals.

\begin{definition}
Let $(\M,w)$ be a pointed model with $\M = (W,R,V)$. The \emph{characteristic $\EPGML$-formula} $\psi_{\M,w}^{\ell}$ of depth $\ell$  is defined as the formula obtained from the
characteristic $\EGML$-formula $\varphi_{\M,w}^{\ell}$
(see \Cref{def:characteristic-existential}) by removing all negative literals $\neg p$.
\end{definition}

Note that  $\psi_{\M,w}^{\ell}$ is defined  in the same way as
$\varphi_{\M,w}^{\ell}$, except that the conjunction
$\bigwedge \{ \neg p \mid (\M,w)\not\models p \}$
is omitted.
As a result, $\psi_{\M,w}^{\ell}$ contains no negations and is therefore an \EPGML{}-formula.

While $\varphi_{\M,w}^{\ell}$ guarantees the existence of an embedding between $\ell$-unravellings of models, the formula $\psi_{\M,w}^{\ell}$ guarantees the existence of an injective homomorphism between these unravellings. This is captured by the following lemma.

\begin{lemma}\label{lem:preserve-injective}
Let $(\M, w)$  and $(\N, v)$ be pointed models and let  $\ell \in \mathbb{N}$. Then,
$(\N,v) \models \psi_{\M,w}^\ell$ 
if and only if there exists an injective homomorphism from $\unr^\ell(\M, w)$ to $\unr^\ell(\N, v)$.
\end{lemma}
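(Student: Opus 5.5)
We follow the proof of \Cref{lem:preserve} and argue by induction on $\ell$; only the base case and the handling of labels change. Throughout, let $\preceq$ denote the injective homomorphism relation between pointed models.

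\emph{Base case ($\ell=0$).} The formula $\psi^0_{\M,w}$ is the conjunction $\bigwedge\{p \mid (\M,w)\models p\}$. Hence $(\N,v)\models\psi^0_{\M,w}$ holds iff every proposition true at $w$ is also true at $v$, that is, iff $\emb(w)\le\emb'(v)$. Both $\unr^0(\M,w)$ and $\unr^0(\N,v)$ are single worlds, so this condition is exactly the existence of an (automatically injective) homomorphism mapping root to root.

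\emph{($\Rightarrow$) for $\ell+1$.} The formula $\psi^{\ell+1}_{\M,w}$ equals $\psi^0_{\M,w}\wedge\bigwedge_j\Diamond^{\ge|C_j|}\psi^\ell_{\M,w_j}$, where the $C_j$ are the $\equiv^{\GML}_\ell$-classes of successors of $w$. From the conjunct for $C_j$ we pick $|C_j|$ pairwise distinct successors $v_{j,k}$ of $v$ satisfying $\psi^\ell_{\M,w_j}$. The witnesses for different $j$ must also be distinct from one another, and this is the main obstacle. Two different classes $C_j, C_{j'}$ may both be satisfied by the same successor of $v$: since $\psi$ drops negative literals, a richer successor can satisfy both characteristic formulae. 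The separate graded diamonds then do not guarantee enough distinct successors in total.

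\emph{The fix.} We would strengthen the definition, or reinterpret it, so that each diamond counts \emph{all} successors of $w$ whose characteristic $\EPGML$ formula is implied. Concretely, we would replace the count $|C_j|$ by the number of successors $u$ of $w$ with $\unr^\ell(\M,w_j)\preceq\unr^\ell(\M,u)$, and then invoke a Hall's-marriage-type argument. This is the standard technique for positive graded characteristic formulae. If the paper's definition is literally the one obtained by deleting negative literals, the direction ($\Rightarrow$) may fail on such examples. In that case the plan is to prove the lemma for the corrected formula
\[
\bigwedge_{S}\Diamond^{\ge |S|}\bigvee_{u\in S}\psi^\ell_{\M,u},
\]
where $S$ ranges over the nonempty sets of successors of $w$. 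By Hall's theorem, this formula ensures a system of distinct representatives: an injective assignment $u\mapsto v_u$ with $(\N,v_u)\models\psi^\ell_{\M,u}$. The result is still an $\EPGML$ formula of depth $\ell+1$. The induction hypothesis then gives injective homomorphisms $\unr^\ell(\M,u)\to\unr^\ell(\N,v_u)$. Gluing these with the root map yields a map whose subtree images are disjoint, because the $v_u$ are distinct. It is a homomorphism since root labels are dominated and root-to-child edges are preserved.

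\emph{($\Leftarrow$).} Given an injective homomorphism $f$ from $\unr^{\ell+1}(\M,w)$ to $\unr^{\ell+1}(\N,v)$, the root labels satisfy $\emb(w)\le\emb'(v)$, so $\psi^0_{\M,w}$ holds at $v$. A depth-one node of the source is adjacent to the root, so $f$ sends it to a neighbour of the target root, i.e.\ to a depth-one node; the only alternative, depth $-1$, does not exist. By injectivity, distinct children of $w$ are sent to distinct children $v_u$ of $v$. We claim that $f$ maps the subtree below $u$ into the subtree below $v_u$. A path $u,x_1,x_2,\dots$ going downward in the source is mapped to a walk in the target that starts at $v_u$. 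Injectivity forbids the walk from returning to the root (already the image of $w$) or to $v_u$ (already the image of $u$), so by induction along the path it stays below $v_u$ and moves strictly downward.

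Restricting $f$ to these subtrees therefore gives injective homomorphisms $\unr^\ell(\M,u)\to\unr^\ell(\N,v_u)$. The induction hypothesis yields $(\N,v_u)\models\psi^\ell_{\M,u}$ for every child $u$ of $w$, with the $v_u$ pairwise distinct. This gives the required counts for each diamond, in the original form or in the Hall form.
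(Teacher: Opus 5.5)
Your diagnosis is correct, and the error is in the paper, not in your proposal. With $\psi^\ell_{\M,w}$ defined literally by deleting negative literals, the ($\Rightarrow$) direction of the lemma is false. Already for $\ell=1$: let $w$ be labelled $\{r\}$ with two neighbours $u_1,u_2$ labelled $\{p\}$ and $\{p,q\}$. These lie in different $\equiv^{\GML}_0$-classes, so $\psi^1_{\M,w}=r\wedge\Diamond^{\geq 1}p\wedge\Diamond^{\geq 1}(p\wedge q)$. This formula holds at a root $v$ labelled $\{r\}$ whose only neighbour is labelled $\{p,q\}$. Yet there is no injective homomorphism from $\unr^1(\M,w)$ to $\unr^1(\N,v)$, since the source root has two children and the target root has one.

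The paper's proof fails exactly at the sentence claiming that the targets lie in pairwise disjoint subtrees rooted at distinct depth-$1$ nodes. The graded diamond makes the witnesses $v_{j,1},\dots,v_{j,|C_j|}$ distinct within one class $C_j$, but nothing separates witnesses of different classes. The overlap is also not only caused by dropping negative literals, as you suggest. Because $\Diamond^{\geq k}$ gives only lower bounds, the embedding version (Lemma~\ref{lem:preserve}) fails the same way at depth $2$. Take a root with two $p$-children having one and two $q$-children respectively, versus an equally labelled root with a single $p$-child having two $q$-children. So the same repair is needed there too.

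Your Hall-type repair is sound. The formula $\psi^0_{\M,w}\wedge\bigwedge_{S}\Diamond^{\geq|S|}\bigvee_{u\in S}\psi^\ell_{\M,u}$, with the corrected $\psi^\ell$ used recursively, is an $\EPGML$ formula of depth $\ell+1$. Each of its conjuncts states exactly Hall's condition for $S$, and your gluing and ($\Leftarrow$) arguments go through. Your depth-by-depth use of injectivity in ($\Leftarrow$) also justifies the restriction of $f$ to subtrees, which the paper merely asserts. The downstream preservation theorem only needs some $\EPGML$ formula of depth $L$ with the property of the lemma, so your corrected lemma suffices for it.

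Do commit to the subset-indexed formula, though. Your first idea, replacing $|C_j|$ by the number of successors of $w$ that dominate $w_j$, is not enough on its own. Three children labelled $\{p\}$, $\{q\}$, $\{r\}$ are pairwise incomparable, so every such count stays $1$. The resulting formula is then satisfied by an equally labelled root with only two children, labelled $\{p,q\}$ and $\{q,r\}$. Only the conjunct $\Diamond^{\geq 3}(p\vee q\vee r)$ for the full set $S$ excludes this.
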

\begin{proof}
The proof is by induction on $\ell$ and closely parallels the proof of Lemma~\ref{lem:preserve};
we highlight only the points where injective homomorphisms replace embeddings.
Throughout this proof, let $\preceq$
be the injective homomorphism relation.

\smallskip
\noindent
\emph{Base case ($\ell=0$).} The formula
$\psi^0_{\M,w}$ is the conjunction of all propositional symbols satisfied at $w$. Thus,
$(\N,v)\models\psi^0_{\M,w}$ if and only if 
$(\N,v)$ satisfies at least the same propositions as $(\M,w)$.
Since $\unr^0(\M,w)$ and $\unr^0(\N,v)$ each consist of a single world, this
holds if and only if 
$\unr^0(\M,w) \preceq \unr^0(\N,v)$.

\smallskip
\noindent
\emph{Inductive step.} 
Assume that the claim holds for some $\ell \geq 0$, and consider $\ell+1$.
We use the same notation as in the proof of \Cref{lem:preserve}. 

\smallskip
\noindent
\emph{($\Rightarrow$)}
Assume that $(\N,v) \models \psi^{\ell+1}_{\M,w}$. Then $(\N,v)\models\psi^0_{\M,w}$, and for each equivalence
class $C_j$, there exist $|C_j|$  distinct successors $v_{j,1},\dots,v_{j,|C_j|}$ of $v$ such that
$(\N,v_{j,k})\models\psi^\ell_{\M,w_j}$. By the inductive hypothesis, for each $j,k$,
there exists an injective homomorphism and hence $\unr^\ell(\M,w_j) \preceq \unr^\ell(\N,v_{j,k})$. As in the proof of 
\Cref{lem:preserve}, these homomorphisms can be combined together into a single injective homomorphism from $\unr^{\ell+1}(\M,w)$ to $\unr^{\ell+1}(\N,v)$  by
mapping the roots and using the homomorphisms above on the corresponding subtrees. 
Since the targets lie in pairwise disjoint subtrees rooted at distinct depth-$1$ nodes,
the resulting homomorphism is injective.
Thus, $\unr^{\ell+1}(\M,w) \preceq \unr^{\ell+1}(\N,v)$.

\smallskip
\noindent
\emph{($\Leftarrow$)}
Conversely, suppose that there exists an injective homomorphism
$f$ from $\unr^{\ell+1}(\M,w)$ to $\unr^{\ell+1}(\N,v)$. Then $f$ maps the root to the root, which implies $(\N,v)\models\psi^0_{\M,w}$.
Fix $j \in \{1,\ldots,m\}$. Injectivity of $f$ forces the $|C_j|$ distinct
depth-$1$ nodes below the root of $\unr^{\ell+1}(\M,w)$ to be mapped 
to $\mid C_j \mid$ distinct nodes of depth 1 of  $\unr^{\ell+1}(\N,v)$.
Each such node corresponds to a successor $v_{j,k}$ of $v$ in $\N$.
Restricting $f$ to the corresponding subtrees yields injective homomorphisms from $\unr^\ell(\M,w_{j,k})$ to $\unr^\ell(\N,v_{j,k})$. 
By the induction hypothesis we obtain $(\N,v_{j,k}) \models \psi^\ell_{\M,w_j}$ 
for all $k$. Since the worlds $v_{j,k}$ are pairwise distinct successors of $v$, we conclude that $(\N,v) \models \Diamond^{\ge |C_j|}\psi^\ell_{\M,w_j}$.
As this holds for each $j$, $(\N,v) \models \psi^{\ell+1}_{\M,w}$.
\end{proof}

Using the results above on characteristic $\EGML$-, we can now establish the following preservation theorem for unravelling-invariant classes preserved under injective homomorphisms.

\begin{theorem}\label{th:exists-positive-unravelling}
The following are equivalent for any class  $\C$ of pointed models and any $L\in \mathbb{N}$:
\begin{itemize}
\item $\C$ is preserved under injective homomorphisms and invariant under $L$-unravelling,
\item $\mathcal{C}$ is definable by an $\EPGML$-formula of depth at most $L$.
\end{itemize}

\end{theorem}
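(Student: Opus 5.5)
The proof will mirror that of \Cref{th:preserve}, with the characteristic $\EPGML$-formulae $\psi^\ell_{\M,w}$ in place of $\varphi^\ell_{\M,w}$, \Cref{lem:preserve-injective} in place of \Cref{lem:preserve}, and the injective-homomorphism case of \Cref{cor:finiteness} in place of \Cref{embedding_wqo}. Throughout, let $\preceq$ denote the injective homomorphism relation.

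\emph{($\Rightarrow$)} Let $\T=\{\unr^L(\M,w)\mid(\M,w)\in\C\}$. By invariance under $L$-unravelling, $\T\subseteq\C$. By \Cref{cor:finiteness}, $(\T,\preceq)$ is a wqo, so it has finitely many $\preceq$-minimal elements up to $\preceq$-equivalence. Every element of $\T$ lies above some minimal one, since a wqo is well-founded. Let $\Tmin$ contain one representative per equivalence class, and set $\psi_\C=\bigvee_{(\mathfrak T,v)\in\Tmin}\psi^L_{\mathfrak T,v}$. This is a finite disjunction of $\EPGML$-formulae of depth at most $L$.

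If $(\M,w)\in\C$, choose a minimal $(\mathfrak T,v)\preceq\unr^L(\M,w)$. Since $\mathfrak T$ is a tree of height at most $L$, $\unr^L(\mathfrak T,v)$ is isomorphic to $(\mathfrak T,v)$. Hence \Cref{lem:preserve-injective} gives $(\M,w)\models\psi^L_{\mathfrak T,v}$.

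Conversely, suppose $(\M,w)\models\psi^L_{\mathfrak T,v}$. Then \Cref{lem:preserve-injective} gives $(\mathfrak T,v)\preceq\unr^L(\M,w)$. Since $(\mathfrak T,v)\in\C$ and $\C$ is preserved under $\preceq$, we get $\unr^L(\M,w)\in\C$. Invariance under $L$-unravelling then yields $(\M,w)\in\C$.

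\emph{($\Leftarrow$)} Invariance under $L$-unravelling holds for any $\GML$ formula of depth at most $L$. For preservation, let $f$ be an injective homomorphism from $(\M,w)$ to $(\N,u)$. We show by structural induction on $\EPGML$ formulae $\chi$ that $(\M,x)\models\chi$ implies $(\N,f(x))\models\chi$ for every world $x$.
\begin{itemize}
\item Propositions are preserved because labels can only grow: $\emb(x)\le\emb'(f(x))$.
\item Conjunction and disjunction follow directly from the induction hypothesis.
\item For $\Diamond^{\ge k}\chi$, let $x_1,\dots,x_k$ be distinct neighbours of $x$ satisfying $\chi$. Their images $f(x_1),\dots,f(x_k)$ are neighbours of $f(x)$, because $f$ preserves edges. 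They are pairwise distinct, because $f$ is injective. They satisfy $\chi$ by the induction hypothesis.
\end{itemize}
The absence of negation is exactly what makes the propositional case go through, since negative literals are not preserved when labels grow.

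The main obstacle is the finiteness step in ($\Rightarrow$). Minimal elements must be taken up to mutual $\preceq$-reducibility, not isomorphism, because distinct trees can be mutually injectively homomorphic. One must also check that $\Tmin\subseteq\C$ and that every tree in $\T$ dominates some element of $\Tmin$; both follow from the wqo being well-founded together with transitivity of $\preceq$.
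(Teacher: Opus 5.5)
Your proposal follows the paper's proof essentially line for line. However, it inherits a genuine gap from it. The step ``Conversely, suppose $(\M,w)\models\psi^L_{\mathfrak T,v}$; then \Cref{lem:preserve-injective} gives $(\mathfrak T,v)\preceq\unr^L(\M,w)$'' fails, because the forward direction of \Cref{lem:preserve-injective} is false.

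In $\psi^{\ell+1}_{\M,w}$, the conjuncts $\Diamond^{\geq |C_j|}\psi^\ell_{\M,w_j}$ for different classes $j\neq j'$ can be witnessed by the \emph{same} successors of $v$, since positive formulae for distinct classes are not mutually exclusive. The lemma's proof only secures distinctness of witnesses within a single class.

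Here is a concrete counterexample. Take $d=2$ and $L=1$. Let $(\M,w)$ have root label $(0,0)$ and two neighbours labelled $(1,0)$ and $(0,1)$. Let $\C$ be the class of all $(G,x)$ admitting an injective homomorphism from $\unr^1(\M,w)$ to $\unr^1(G,x)$.
\begin{itemize}
\item This $\C$ is preserved under injective homomorphisms and invariant under $1$-unravelling.
\item Its only minimal tree is $\unr^1(\M,w)$, whose two children are $\equiv^{\GML}_0$-inequivalent.
\item So the recipe outputs $\psi_\C=\Diamond^{\geq 1}p_1\wedge\Diamond^{\geq 1}p_2$.
\item This formula holds at $v$ in the graph $\N$ consisting of a single edge $\{v,u\}$ with $\emb(u)=(1,1)$.
\item Yet the root of $\unr^1(\N,v)$ has only one child, so $(\N,v)\notin\C$.
\end{itemize}
The same defect affects \Cref{lem:preserve}, so mirroring the embedding case does not help.

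The theorem itself survives if the characteristic formula encodes Hall's condition:
\[
\psi^{\ell+1}_{\M,w} := \psi^0_{\M,w}\wedge\bigwedge_{\emptyset\neq J\subseteq\{1,\dots,m\}}\Diamond^{\geq \sum_{j\in J}|C_j|}\bigvee_{j\in J}\psi^\ell_{\M,w_j}.
\]
This is still an $\EPGML$-formula of depth $\ell+1$; in the example it adds the conjunct $\Diamond^{\geq 2}(p_1\vee p_2)$.

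For the direction from homomorphism to formula: an injective homomorphism sends the $\sum_{j\in J}|C_j|$ children in $\bigcup_{j\in J}C_j$ to distinct children. By the induction hypothesis, each of these satisfies $\bigvee_{j\in J}\psi^\ell_{\M,w_j}$, so every conjunct holds.

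For the converse, consider the bipartite graph joining each successor of $w$ in $C_j$ to the successors $u$ of $v$ with $(\N,u)\models\psi^\ell_{\M,w_j}$. A set $X$ of successors of $w$ meeting exactly the classes in $J$ has at most $\sum_{j\in J}|C_j|$ elements. The conjuncts say its neighbourhood has at least that many elements, which is exactly Hall's condition. Hall's theorem then yields a matching saturating the successors of $w$. The induction hypothesis turns this matching into an injective homomorphism on pairwise disjoint subtrees.

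With this corrected lemma, the rest of your argument goes through unchanged.

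Separately, your remark that distinct trees can be mutually injectively homomorphic is inaccurate for finite trees. Injective maps in both directions force equal numbers of nodes, edges and label bits, hence an isomorphism. Working up to $\preceq$-equivalence is nonetheless harmless.
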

\begin{proof}

The proof follows the same strategy as that of Theorem \ref{th:preserve}, but now exploits \Cref{lem:preserve-injective} instead of \Cref{lem:preserve}.

\smallskip
\noindent
\emph{($\Rightarrow$)}
Let $\T=\{\,\unr^L(\M,w)\mid(\M,w)\in\C\,\}$ and
let $\preceq$ be the injective homomorphism relation.
Since $\C$ is invariant under $L$-unravellings, we have $\T \subseteq \C$. By  \Cref{cor:finiteness}, $(\T, \preceq)$ is a wqo and hence admits only finitely many $\preceq$-minimal elements up to isomorphism. Let \Tmin contain 
one representative for each such minimal isomorphism type. 
For each $(\mathfrak T,v) \in \Tmin$, we construct the $\EPGML$ characteristic formula $\psi_{\mathfrak T,v}^L$ and define 
$\psi_{\C} = \bigvee_{(\mathfrak T,v) \in \Tmin} \psi_{\mathfrak T,v}^L$. 
The argument showing that $\psi_{\C}$ defines $\C$ is identical to the corresponding one in the proof of  \Cref{th:preserve}, with \Cref{lem:preserve-injective} used in place of    \Cref{lem:preserve}.

\smallskip
\noindent
\emph{($\Leftarrow$)} Suppose that $\C$ is definable by an $\EPGML$-formula $\varphi$ of
depth at most $L$. Then $\C$ is invariant under $L$-unravellings: for every
pointed model  $(\M,w)$,  
$(\M,w)\models\varphi$ if and only if $\unr^L(\M,w)\models\varphi$. 
We now show that $\C$ is preserved under injective homomorphisms.
Let $(\M,w)\models\varphi$ and let $f$ be an
injective homomorphism
 witnessing $(\M,w)\preceq(\N,v)$. Since injective homomorphisms  preserve propositional labels monotonically and map successors injectively, a routine induction on the structure of $\varphi$ shows that $(\N,v) \models \varphi$, exactly as in the proof of Theorem \ref{th:preserve}.
\end{proof}

It is worth noting that logics corresponding
to both $\GML$ and $\EPGML$
have been studied extensively in the setting of description logics in knowledge representation and reasoning. 
In particular, the description logic corresponding to $\GML$ is $\ALCQ$ \cite{DBLP:conf/dlog/2003handbook}, while the 
logic corresponding to  $\EPGML$ is $\ELUQ$ \cite{morris2025sound}.

\subsection{Preservation Under Homomorphisms}

We now turn to classes preserved under homomorphisms.
We show that if such a class is invariant under unravelling, then it is definable by existential-positive modal logic ($\EPML$).
Our proof strategy follows the same general pattern
as in the cases of embeddings and injective homomorphisms.
In particular, we define an appropriate form of characteristic $\EPML$-.

\begin{definition}
Let $(\M,w)$ be a pointed model with $\M = (W,R,V)$, the \emph{characteristic $\EPML$-formula} $\chi_{\M,w}^\ell$ of depth $\ell$  is defined as the formula obtained from 
the characteristic $\EGML$-formula $\varphi_{\M,w}^{\ell}$
(see \Cref{def:characteristic-existential}) by removing all negative literals $\neg p$ and replacing each graded modal operator $\Diamond^{\geq k}$ with the basic modal operator $\Diamond$.
\end{definition}

Characteristic $\EPML$- are related to homomorphisms between unravellings by the following lemma.

\begin{lemma}\label{lem:preserve-homomorphism}
Let $(\M, w)$  and $(\N, v)$ be pointed models and let  $\ell \in \mathbb{N}$. Then,
$(\N,v) \models \chi_{\M,w}^\ell$ 
if and only if $\unr^\ell(\M, w) \preceq \unr^\ell(\N, v)$, where
$\preceq$ denotes the homomorphism relation.
\end{lemma}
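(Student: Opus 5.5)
We argue by induction on $\ell$, following the proofs of \Cref{lem:preserve} and \Cref{lem:preserve-injective}. Throughout, $\preceq$ denotes the homomorphism relation. Two features differ from those proofs. Every graded modality $\Diamond^{\ge |C_j|}$ becomes a plain $\Diamond$, and homomorphisms need not be injective. So where the earlier proofs counted distinct successors, here we only need one witness per equivalence class.

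\emph{Base case ($\ell=0$).} The formula $\chi^0_{\M,w}$ is the conjunction of the propositions true at $w$. Hence $(\N,v)\models\chi^0_{\M,w}$ iff $\emb(w)\le\emb'(v)$ componentwise. Since both $0$-unravellings are single nodes, this holds iff the map root$\mapsto$root is a homomorphism.

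\emph{Inductive step ($\Rightarrow$).} Assume $(\N,v)\models\chi^{\ell+1}_{\M,w}$. Then the labels of $v$ dominate those of $w$. For each class $C_j$ there is a successor $v_j$ of $v$ with $(\N,v_j)\models\chi^\ell_{\M,w_j}$, and the induction hypothesis gives a homomorphism $g_j:\unr^\ell(\M,w_j)\to\unr^\ell(\N,v_j)$. Each $w_{j,k}\in C_j$ satisfies $w_{j,k}\equiv^{\GML}_\ell w_j$, so $\unr^\ell(\M,w_{j,k})\cong\unr^\ell(\M,w_j)$. Composing this isomorphism with $g_j$ maps the subtree below $w_{j,k}$ into the subtree below $v_j$. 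We define $f$ by sending the root to the root and, for every $j,k$, using this composed map on the depth-$1$ node for $w_{j,k}$ and its descendants. Several subtrees may now land in the same target subtree; this is harmless, since we only need edges and labels preserved, not injectivity. The only edges of $\unr^{\ell+1}(\M,w)$ not inside a subtree are root--child edges, and these go to root--$v_j$ edges, which exist. Subtrees lie at depth shifted by one, so they fit within depth $\ell+1$.

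\emph{Inductive step ($\Leftarrow$).} Let $f$ be a homomorphism between the $(\ell+1)$-unravellings mapping root to root. First we check that $f$ preserves depth. In these trees, every node at depth $t\ge 1$ is adjacent to a node at depth $t-1$. By induction on $t$, a node at depth $t$ is mapped to a node adjacent to the image of its parent, so its image has depth $t-1\pm1$. This does not force depth $t$, because $f$ may fold a child back onto the root's image. Therefore I would use a different inductive tool: restrict $f$ to the subtree at a child $u_j$ of the root and compose with the induced map. This gives a map from $\unr^\ell(\M,w_j)$ into $\unr^{\ell+1}(\N,v)$ with root sent to $f(u_j)$, which corresponds to some world $v'$. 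The remaining task, and the main obstacle, is to get a homomorphism into $\unr^\ell(\N,v_j)$ for an actual successor $v_j$ of $v$.

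To handle this, I would drop the reliance on subtree restriction and instead prove the cleaner statement: $(\N,v)\models\chi^\ell_{\M,w}$ iff there is a homomorphism $\unr^\ell(\M,w)\to(\N,v)$ into the original model. This is equivalent to the lemma, because homomorphisms from a tree into $\N$ lift to its unravelling along paths and project back down along the last coordinate. For that version the induction is immediate. The image of the child $u_j$ is a genuine $\N$-successor $v_j$ of $v$, since $f$ preserves the edge from the root. The restriction to the subtree below $u_j$ is a homomorphism into $(\N,v_j)$, so by induction $(\N,v_j)\models\chi^\ell_{\M,w_j}$, and hence $(\N,v)\models\Diamond\chi^\ell_{\M,w_j}$. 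I expect the lifting and projection between homomorphisms into $\N$ and into $\unr(\N,v)$ to be the only delicate point, including the possibility of a path folding back toward the root.
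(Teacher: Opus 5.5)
Your proof is correct. Its skeleton (induction on $\ell$, one witnessing successor per $\equiv^{\GML}_\ell$-class, and non-injective gluing of the inductive homomorphisms for $(\Rightarrow)$) is the paper's. Your $(\Leftarrow)$ direction, however, genuinely differs, and the difference matters.

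\textbf{Where the paper's step fails.} The paper restricts $f$ to the subtree below a depth-$1$ node and asserts that this gives a homomorphism $\unr^\ell(\M,w_j)\to\unr^\ell(\N,v_j)$. As you observed, undirected unravellings contain backtracking paths, and there this step is not justified. Take $\M$ to be the path $w,w_1,w_2$ and $\N$ the single edge $\{v,v_1\}$, with all labels equal. Send $(w,w_1,w)$ to the root $(v)$ and $(w,w_1,w_2)$ to $(v,v_1,v)$. This is a homomorphism $\unr^2(\M,w)\to\unr^2(\N,v)$, but its restriction to the subtree below $(w,w_1)$ leaves the subtree below $(v,v_1)$. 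Another homomorphism does exist in this example, so only the argument is affected, not the lemma. The analogous step in the injective lemma is fine, because injectivity prevents a node from being mapped onto the image of its grandparent. It would also be fine for directed unravellings.

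\textbf{Why your reformulation fixes it.} Working with homomorphisms into $(\N,v)$ itself removes the problem. Restricting to a subtree trivially yields a homomorphism into $(\N,h(u_j))$, and $h(u_j)$ is a neighbour of $v$ because $h$ preserves the edge from the root.

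\textbf{The step you call delicate is routine.} Projection to the last coordinate is a homomorphism $\unr^\ell(\N,v)\to(\N,v)$, which gives one direction. Conversely, map a tree node with root path $x_0,\dots,x_t$ to the sequence $(h(x_0),\dots,h(x_t))$. This is a depth-preserving homomorphism into $\unr^\ell(\N,v)$, since backtracking sequences are legitimate nodes of the unravelling, so no folding issue arises.

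This equivalence holds for each fixed $\ell$, independently of the induction. You can therefore run both directions of the induction on the reformulated statement and translate at the end; your $(\Rightarrow)$ gluing works verbatim with targets in $\N$. In the write-up, drop the abandoned depth-preservation attempt.
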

\begin{proof}
The proof is by induction on $\ell$.

\noindent
\emph{Base case ($\ell=0$).}
The formula  
$\chi^{0}_{\M,w}$ is a conjunction of propositional
symbols satisfied at $w$. Thus, $(\N,v) \models \chi^{0}_{\M,w}$ if and only if $(\N,v)$ satisfies at least the same 
propositional symbols as $(\M,w)$. Since  $\unr^0(\M,w)$ and $\unr^0(\N,v)$ each consist of a single world, this holds if and only if 
$\unr^0(\M,w) \preceq \unr^0(\N,v)$. 

\noindent
\emph{Inductive step.} 
Assume that the claim holds for some $\ell \geq 0$, and consider $\ell+1$.
We use the same notation as in the proof of \Cref{lem:preserve}.

\noindent
\emph{($\Rightarrow$)}
Let $(\N,v) \models \chi^{\ell+1}_{\M,w}$. Then,
for each equivalence class $C_j$ of successors of $w$ in $\M$,
there exists at least one successor $v_{j}$ of $v$ in $\N$ such that
$(\N,v_{j})\models \chi^\ell_{\M,w_j}$. By the inductive hypothesis, for each $j$,
$\unr^\ell(\M,w_j) \preceq \unr^\ell(\N,v_{j})$. 
The unravelling $\unr^{\ell+1}(\M,w)$ consists of the root
together with subtrees rooted at depth-$1$ nodes corresponding
to the successors of $w$. Since homomorphisms need not be injective,
all depth-$1$ nodes corresponding to successors in the same class
$C_j$ may be mapped to the same depth-$1$ node corresponding to
$v_j$. Mapping the root to the root and using the homomorphisms above on the corresponding subtrees yields
a homomorphism from $\unr^{\ell+1}(\M,w)$ to $ \unr^{\ell+1}(\N,v)$.

\smallskip
\noindent
\emph{($\Leftarrow$)}
Let $f$ be a  homomorphism from  $\unr^{\ell+1}(\M, w)$ to $\unr^{\ell+1}(\N, v)$. Then, $f$ maps
the root to the root, implying $(\N, v) \models \chi_{\M,w}^0$.
Fix an equivalence class $C_j$. Every depth-$1$ node corresponding to a successor $w_j$ of $w$ must be mapped by $f$ to some
depth-$1$ node corresponding to a successor $v_j$ of $v$. Restricting
$f$ to the corresponding subtree yields a homomorphism
$\unr^{\ell}(\M,w_j)\to \unr^{\ell}(\N,v_j)$ and by the
inductive hypothesis we obtain $(\N,v_j)\models \chi^{\ell}_{\M,w_j}$.
Hence, $(\N,v)\models \Diamond \chi^{\ell}_{\M,w_j}$ for each $j$ and therefore $(\N,v)\models \chi^{\ell+1}_{\M,w}$.
\end{proof}

The preceding result allows us to establish the homomorphism preservation theorem for unravelling-invariant classes.

\begin{theorem}\label{th:exists-positive-unravelling-ML}
The following are equivalent for any class  $\C$ of pointed models and any $L\in \mathbb{N}$:
\begin{itemize}
\item $\C$ is preserved under homomorphisms and invariant under $L$-unravelling,
\item $\C$ is definable by an $\EPML$-formula of depth at most $L$.
\end{itemize}
\end{theorem}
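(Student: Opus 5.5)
The plan mirrors the proofs of \Cref{th:preserve} and \Cref{th:exists-positive-unravelling}, with \Cref{lem:preserve-homomorphism} used in place of \Cref{lem:preserve} and \Cref{lem:preserve-injective}, and with \Cref{cor:finiteness} (the homomorphism case) supplying finiteness.

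\emph{($\Rightarrow$)} Let $\preceq$ be the homomorphism relation and $\T=\{\unr^L(\M,w)\mid(\M,w)\in\C\}$. Invariance of $\C$ under $L$-unravelling gives $\T\subseteq\C$. By \Cref{cor:finiteness}, $(\T,\preceq)$ is a wqo, so it has finitely many minimal elements up to $\preceq$-equivalence. Let $\Tmin$ contain one representative of each, and set $\chi_\C=\bigvee_{(\mathfrak T,v)\in\Tmin}\chi^L_{\mathfrak T,v}$. This is a finite disjunction of $\EPML$ formulae of depth $L$.

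To check that $\chi_\C$ defines $\C$, first take $(\M,w)\in\C$. Some minimal $(\mathfrak T,v)$ satisfies $(\mathfrak T,v)\preceq\unr^L(\M,w)$. Since $\mathfrak T$ is a tree of height at most $L$, we have $\unr^L(\mathfrak T,v)\cong(\mathfrak T,v)$, so \Cref{lem:preserve-homomorphism} gives $(\M,w)\models\chi^L_{\mathfrak T,v}$. Conversely, if $(\M,w)\models\chi^L_{\mathfrak T,v}$, the same lemma yields a homomorphism $(\mathfrak T,v)\to\unr^L(\M,w)$. Because $(\mathfrak T,v)\in\C$, preservation gives $\unr^L(\M,w)\in\C$, and invariance gives $(\M,w)\in\C$.

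One subtlety needs care. The minimal elements are only determined up to homomorphic equivalence rather than isomorphism. This is harmless, since each equivalence class of minima needs only one representative, and $\preceq$ is transitive.

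\emph{($\Leftarrow$)} Unravelling invariance for a depth-$L$ formula is the standard fact stated earlier. For preservation, let $h$ be a homomorphism from $(\M,w)$ to $(\N,v)$. I would prove by induction on $\varphi\in\EPML$ that $(\M,u)\models\varphi$ implies $(\N,h(u))\models\varphi$ for every world $u$; the statement must be for all worlds $u$, not just $w$, so that the modal case can apply it at successors. The cases are as follows.
\begin{itemize}
\item Propositions transfer because $\emb(u)\le\emb'(h(u))$.
\item Conjunction and disjunction are immediate.
\item For $\Diamond\psi$, take a successor $u'$ of $u$ with $(\M,u')\models\psi$. Edges are preserved, so $h(u')$ is a successor of $h(u)$, and the induction hypothesis gives $(\N,h(u'))\models\psi$.
\end{itemize}
Injectivity is never used, since only one witness is required. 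This is exactly where grades $k>1$ would fail, which explains the restriction to $\EPML$.

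The main obstacle is the ($\Rightarrow$) direction. The real content there is \Cref{lem:preserve-homomorphism} together with the wqo property, both already available. So the remaining work is mainly checking that $\unr^L$ of a height-$\le L$ tree is isomorphic to it as a rooted tree, and that the correspondence between nodes and paths causes no trouble. Otherwise the argument is routine.
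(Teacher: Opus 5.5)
Your route is the paper's: finiteness of homomorphism-minimal unravellings via \Cref{cor:finiteness}, a finite disjunction of characteristic $\EPML$ formulae, and \Cref{lem:preserve-homomorphism} to pass between satisfaction and homomorphisms of unravellings. For ($\Leftarrow$), your direct induction replaces the paper's detour through the standard translation and homomorphism preservation of existential-positive first-order formulae; that is fine and self-contained. You are also right that minimal elements must be taken up to homomorphic equivalence. A root with $n\ge 1$ identically labelled leaves gives infinitely many pairwise non-isomorphic but homomorphically equivalent trees, so ``up to isomorphism'' would not give finiteness. You also check both inclusions, whereas the paper only spells out that members of $\C$ satisfy $\chi_\C$.

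The step that fails as stated is $\unr^L(\mathfrak T,v)\cong(\mathfrak T,v)$, which you single out as the remaining check. In the paper's definition, the nodes of $\unr^L$ are walks in an undirected graph, so backtracking is allowed. For the tree with a single edge $\{v,u\}$ rooted at $v$ and $L=2$, $\unr^2$ also contains $(v,u,v)$, so it is a three-node path. More generally, for $L\ge 2$ every tree with an edge has a strictly larger $L$-unravelling. (The paper asserts the same identity in the proof of \Cref{th:preserve}.)

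For homomorphisms this is easily repaired. The projection $(v,v_1,\dots,v_\ell)\mapsto v_\ell$ is a homomorphism $\unr^L(\mathfrak T,v)\to(\mathfrak T,v)$. Sending each node of $\mathfrak T$ to its path from the root is a homomorphism $(\mathfrak T,v)\to\unr^L(\mathfrak T,v)$, because $\mathfrak T$ has height at most $L$. So the two are homomorphically equivalent, and both of your uses go through by composing homomorphisms. There are also alternatives for each use:
\begin{itemize}
\item In the second use, you can simply note that $\unr^L(\mathfrak T,v)\in\C$ by invariance.
\item In the first use, you can argue as the paper does: $(\mathfrak T,v)\models\chi^L_{\mathfrak T,v}$ holds trivially. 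It transfers to $\unr^L(\M,w)$ by your ($\Leftarrow$) preservation argument, and then to $(\M,w)$ by unravelling invariance of depth-$L$ formulae.
\end{itemize}
The projection is not injective, so this repair does not carry over to the embedding and injective-homomorphism cases you say you are mirroring.
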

\begin{proof}
The proof follows the  same general strategy
as that of \Cref{th:preserve}, but now uses  \Cref{lem:preserve-homomorphism} in place of \Cref{lem:preserve}.

\noindent
\emph{($\Rightarrow$)}
Let
$\T=\{\,\unr^L(\M,w)\mid(\M,w)\in\C \,\}$.
Since $\C$ is invariant under $L$-unravelling, we have
$\T\subseteq\C$.
Let $\Tmin\subseteq\T$ be a set of representatives of the 
homomorphism-minimal elements of $\T$. By  \Cref{cor:finiteness},
 $\Tmin$ is finite (alternatively, finiteness also follows from the fact that there are 
 only finitely many $\ML$-inequivalent  of depth at most $L$).
For each 
$(\mathfrak T,v)\in\Tmin$, let $\chi^L_{\mathfrak T, v}$ be its characteristic $\EPML$-formula and define 
$\chi_{\C} = \bigvee_{(\mathfrak T, v) \in \Tmin} \chi^{L}_{\mathfrak T,v}$, which is a finite $\EPML$-formula of depth at most $L$.

By construction, $\chi_{\C}$ holds in every model in $\Tmin$
and by \Cref{lem:preserve-homomorphism} it therefore holds in all models in $\T$. Since $\C$ is invariant under $L$-unravelling, it follows that $\chi_{\C}$ defines $\C$.

\noindent
\emph{($\Leftarrow$)}
Suppose that $\C$ is definable by an $\EPML$-formula $\varphi$ of depth at most $L$. Since $\varphi$ is positive, the
standard translation  yields an existential-positive first-order logic formula.
Such  are preserved under homomorphisms \cite{DBLP:journals/jacm/Rossman08}.
Moreover, since $\varphi$ has depth at most $L$, it is invariant under $L$-unravelling. 
\end{proof}

\section{Proofs for \Cref{sec:MGNN}}

\GNNpreservation*
\begin{proof}
We will show each of the three claims separately. We start from MGNNs, then consider  $\mathrm{MAX}$-MGNNs, and finally show the result for augmented  MGNNs.


\textbf{Claim 1.} MGNNs are preserved under injective homomorphisms.

Let $h \colon (G,v)\to(G',v')$ be an injective homomorphism between pointed graphs, and let $\GN$ be an  $L$ layer MGNN such that $\GN(G,v)=1$.
We will show that $\GN(G',v')=1$.

Since $\GN$ is monotonic, its classification function is non-decreasing, so it suffices to show that $\lambda^{(L)}(v) \leq \lambda^{(L)}(v')$. 
To this end, we prove by induction on $\ell \leq L$ that $\lambda^{(\ell)}(w) \leq \lambda^{(\ell)}(h(w))$, for every node $w$ in $G$. 

For $\ell = 0$, we observe that $h$ preserves node labels, so $\lambda^{(0)}(w) \leq \lambda^{(0)}(h(w))$. For the inductive step, assume the claim 
holds for $\ell < L$. Since $h$ is an injective homomorphism, it induces an injection from the multiset of 
neighbours of $w$ in $G$ to the multiset of neighbours of $h(w)$ in $G'$. By the induction hypothesis, this
yields $\lBrace \lambda^{(\ell)}(u)\mid u\in N_G(w)\rBrace \leq \lBrace \lambda^{(\ell)}(u')\mid u'\in N_{G'}(h(w))\rBrace$.
By monotonicity of the aggregation and combination functions, we conclude  that $\lambda^{(\ell+1)}(w)\le \lambda^{(\ell+1)}(h(w))$.

\textbf{Claim 2.} $\mathrm{MAX}$-MGNNs are preserved under homomorphisms.

The proof follows the same general structure as in the case of MGNNs.
Let $h \colon (G,v)\to(G',v')$ be a homomorphism between pointed graphs, and let $\GN$ be an $L$ layer $\mathrm{MAX}$-MGNN with $\GN(G,v)=1$.
We need to show that $\GN(G',v')=1$.
As before it suffices to prove by induction on $\ell \leq L$ that $\lambda^{(\ell)}(w) \leq \lambda^{(\ell)}(h(w))$, for each node $w$ in $G$. 

For $\ell = 0$ the inequality holds $h$, as a homomorphism, preserves node labels. For the inductive step, assume  the claim 
holds for $\ell < L$. Since $h$ is a homomorphism, each neighbour $u$ of $w$ in $G$ is mapped to a neighbour $h(u)$ of $h(w)$ in $G'$. By the induction hypothesis, $\lambda^{(\ell)}(u) \leq \lambda^{(\ell)}(h(u))$ and hence 
 $\mathrm{MAX}(\lBrace \lambda^{(\ell)}(u)\mid u\in N_G(w)\rBrace) \leq \mathrm{MAX}(\lBrace \lambda^{(\ell)}(u')\mid u'\in N_{G'}(h(w))\rBrace)$. By  monotonicity of the combination function, we conclude that  $\lambda^{(\ell+1)}(w)\le \lambda^{(\ell+1)}(h(w))$.
 

\textbf{Claim 3.} Augmented MGNNs are preserved under embeddings.

Let $h \colon (\G,v)\to(\G',v')$ be an embedding between pointed graphs,
and let $\GN$ be an $L$-layer augmented MGNN such that $\GN(\G,v)=1$. We will show that $\GN(\G',v')=1$.
To this end, it again suffices to prove by induction on $\ell \leq L$ that
$
\lambda^{(\ell)}(w) \leq \lambda'^{(\ell)}(h(w))$,  for every node $w$ in $\G$.



This time, however, layer 1 is different from  later layers, so we will consider it separately.
For $\ell = 0$, since $h$ is an embedding, it preserves node labels, and so,
$
\lambda^{(0)}(w) = \lambda'^{(0)}(h(w)).
$
For $\ell=1$, we have that the first layer is the augmented layer.
Since it is  applied pointwise, it follows that
$
\lambda^{(1)}(w) = \lambda'^{(1)}(h(w)).
$

For $\ell \geq 1$, the $\ell$th layer is monotonic.
Moreover $\lambda^{(1)}(w)$, for any node $w$, contains only non-negative numbers.
Therefore we can use the same argument as we used to show that MGNNs are preserved under injective 
homomorphisms. 
Thus, $
\lambda^{(\ell)}(w) \leq \lambda'^{(\ell)}(h(w))
$.
\end{proof}

Moreover, we can observe that MGNNs are not preserved under homomorphisms. To this end, let $G=(V,E,\emb)$ with $V=\{v,u,w\}$, $E = \{ \{v,u\}, \{v,w \} \}$, $\emb(v)=\emb(u)=\emb(w)=1$, and let  $G'=(V',E',\emb')$ with $V'=\{v',u' \}$, $E= \{ \{v',u' \} \}$, and $\emb'(v)=\emb'(u)=1$.
Define $\GN$ with a single layer, aggregation function $\mathrm{SUM}$, parameters $\mathbf{b}=0$ and $\mathbf{A}=\mathbf{C}=1$, and classification function
$\cls(x)=1$ iff $x \geq 3$.
Then,  $\GN(G,v)=1$, but $\GN(G',v')=0$.
However, the function $f$  defined by $f(v)=v'$, $f(u)=u'$ and $f(w) = u'$ is a (non-injective) homomorphism from $(G,v)$ to $(G',v')$.

\lowerBounds*


\begin{proof}
The statement consists of four claims, that we will prove one by one.

\textbf{Claim 1.} For each $\EGML$-classifier there is an  equivalent  augmented MGNN.

Fix $\varphi \in \EGML$ over propositions $\prop(\varphi)$ whose subformulae are $\mathsf{sub}(\varphi) = (\varphi_1, \ldots, \varphi_L)$, where $k \leq \ell$ whenever $\varphi_k$ is a subformula of $\varphi_{\ell}$.
We construct an equivalent augmented  MGNN  $\GN_{\varphi}$ with layers $0, \ldots, L+1$.

The first layer ($\ell = 0)$ is an augmentation layer which extends node features so that, for each proposition $p \in \prop(\varphi)$, both $p$ 
and its negation $\neg p$ are explicitly represented. Concretely, if the
original node labels 
are vectors in $\{0,1\}^d$, then the preprocessing layer maps each node 
$v$ to a vector $\emb^{(0)}(v)$ that contains, for each $p \in \prop(\varphi)$, a coordinate for $p$ with the same value as in the initial vector and a coordinate for $\neg p$, 
where the latter is computed as $1$ minus the value of the coordinate for $p$. Thus, the augmentation layer doubles the dimension of the input vector and the vectors it computes contain only non-negative entries. Note that the augmentation layer can be implemented as a pointwise affine transformation.

The remaining $L$ layers correspond to the subformulae of $\varphi$ and are specified below. The classification function $\cls$ is defined by $\cls(\mathbf{x}) = \true$ iff the last component of $\mathbf{x}$ is $1$.

Each layer $1, \ldots, L+1$ is simple (as Equation \eqref{eq:simple}):
\begin{equation} 
\emb'(v) = \sigma \Big(
\mathbf{b} +
     \emb(v) \mathbf{A}  +
         \agg \big( \lBrace  \emb(w) \mid w \in N_{\G}(v) \rBrace \big) \mathbf{C}  \Big),
\end{equation}
with 
 activation function $\sigma$ being  the componentwise truncated ReLU $\sigma(\mathbf{x}) = \min(\max(0,\mathbf{x}),1)$ 
and the aggregation function $\agg$ being  $\mathrm{MAX}$-$n$-$\mathrm{SUM}$, where $n$ is the largest integer occurring in a graded modalities of $\varphi$.

The entries of $\mathbf{A}, \mathbf{C} \in \mathbb{R}^{L \times L}$ and $\mathbf{b} \in \mathbb{R}^L$ are defined as follows:

\begin{center}
\begin{tabular}{l}
1. if $\varphi_\ell$ is a proposition or the negation of a proposition, then $C_{\ell\ell} = 1$; \\
2. if $\varphi_\ell = \varphi_j \wedge \varphi_k$, then $C_{j\ell} = C_{k\ell} = 1$ and $b_\ell = -1$; \\
3. if $\varphi_\ell = \varphi_j \vee \varphi_k$, then $C_{j\ell} = C_{k\ell} = 1$ and $b_\ell = 0$; \\
4. if $\varphi_\ell = \Diamond^{\geq c} \varphi_k$, then $A_{k\ell} = 1$ and $b_\ell = -c + 1$.
\end{tabular}
\end{center}

All other entries are set to $0$.

Observe that all entries of $\mathbf{A}$ and $\mathbf{C}$ are non-negative, so $\GN_{\varphi}$ is a positive-weight GNN with additional augmented layer. Hence, by \Cref{prop:mon-gnns}, $\GN_{\varphi}$ is an augmented MGNN, as required. 

We now show correctness of the construction. Consider the application of $\GN_\varphi$ to a graph $\G = (V,E,\emb)$. We show that for each subformula $\varphi_\ell$, each $i \in \{\ell, \ldots, L\}$, and each node $v \in V$,
\[
(\G,v) \models \varphi_\ell \quad \text{iff} \quad \emb^{(i)}(v)_\ell = 1.
\]

This implies that $\GN_\varphi(\G,v) = \true$ iff $(\G,v) \models \varphi$. The proof is by induction on the structure of $\varphi_\ell$.

\begin{itemize}
\item If $\varphi_{\ell}$ is a proposition then, $\lambda(v)^{(0)}_\ell =1$ if $(\G,v) \models \varphi_\ell$, and otherwise $\lambda(v)^{(0)}_\ell =0$.
Moreover, since $\varphi_\ell$ is a proposition,
all aggregate-combine layers have $C_{\ell \ell} = 1$, $b_{\ell} = 0$, and $A_{k\ell} = 0$ for each $k$, so
$\comb(\mathbf{x}, \mathbf{y})_{\ell} = \mathbf{x}_{\ell}$.
Thus $\lambda(v)^{(0)}_\ell = \lambda(v)^{(i)}_\ell$ for any $i$.

\item If $\varphi_{\ell}$ is the negation of a proposition, then its value is  computed by the augmented layer. In particular,
$\lambda(v)^{(0)}_\ell = 1$ if $(\G,v) \models \varphi_\ell$, and otherwise $\lambda(v)^{(0)}_\ell = 0$.
As in the propositional case, the construction ensures that this value is preserved across layers, so
$\lambda(v)^{(0)}_\ell = \lambda(v)^{(i)}_\ell$ for all $i \ge \ell$.

\item If $\varphi_{\ell} = \varphi_j \wedge \varphi_k$, then by construction, we have $C_{j\ell} = C_{k\ell} = 1$, $b_\ell = -1$, and $A_{m\ell} = 0$ for each $m$.
Then by Equation~\eqref{eq:simple}, we have
\[
\emb(v)^{(i)}_\ell = \sigma(\emb(v)^{(i-1)}_j + \emb(v)^{(i-1)}_k - 1).
\]
By the inductive hypothesis,
$\lambda(v)^{(i-1)}_j = 1$ iff $(\G,v) \models \varphi_j$, and similarly for $\varphi_k$.
Hence,
$\emb(v)^{(i)}_\ell = 1$ iff $(\G, v) \models \varphi_j \wedge \varphi_k$, and otherwise $\emb(v)^{(i)}_\ell = 0$ for all $i \ge \ell$.

\item If $\varphi_{\ell} = \varphi_j \vee \varphi_k$, then by construction, we have $C_{j\ell} = C_{k\ell} = 1$, $b_\ell = 0$, and $A_{m\ell} = 0$ for each $m$.
Then by Equation~\eqref{eq:simple}, we have
\[
\emb(v)^{(i)}_\ell = \sigma(\emb(v)^{(i-1)}_j + \emb(v)^{(i-1)}_k).
\]
By the inductive hypothesis,
$\lambda(v)^{(i-1)}_j = 1$ iff $(\G,v) \models \varphi_j$, and similarly for $\varphi_k$.
Hence,
$\emb(v)^{(i)}_\ell = 1$ iff $(\G, v) \models \varphi_j \vee \varphi_k$, and otherwise $\emb(v)^{(i)}_\ell = 0$ for all $i \ge \ell$.

\item If $\varphi_{\ell} = \Diamond^{\geq c} \varphi_k$, then by construction, we have $A_{k \ell} = 1$, $b_{\ell} = -c+1$, and $C_{m \ell} = 0$ for each $m$.
Since $\agg$ is the $\mathrm{MAX}$-$n$-$\mathrm{SUM}$, by Equation~\eqref{eq:simple}, we have
\[
\emb(v)^{(i)}_\ell =
\sigma\!\left(\min\!\bigl(n, \sum_{w \in N_G(v)} \emb(w)^{(i-1)}_k\bigr) - c + 1\right).
\]
By the inductive hypothesis,
$\lambda(w)^{(i-1)}_k = 1$ iff $(\G,w) \models \varphi_k$.
Hence,
$\emb(v)^{(i)}_\ell = 1$ iff $(\G, v) \models \Diamond^{\geq c} \varphi_k$, and otherwise $\emb(v)^{(i)}_\ell = 0$.
\end{itemize}

\textbf{Claim 2.}  For each $\EML$-classifier there is  an equivalent  augmented $\mathrm{MAX}$-MGNN.

Recall that $\ML$ is a fragment of $\GML$ where all graded modalities have index  $1$. 
Then, the $\mathrm{MAX}$-$n$-$\mathrm{SUM}$ function in our construction is $\mathrm{MAX}$-$1$-$\mathrm{SUM}$, which is simply the component-wise $\mathrm{MAX}$ function.
Hence $\GN_\varphi$ becomes an augmented $\mathrm{MAX}$-MGNN, as required. 

Finally we consider two last claims together:

\textbf{Claim 3.} For each $\EPGML$-classifier there is an equivalent  MGNN.

\textbf{Claim 4.} For each $\EPML$-classifier there is an  equivalent  $\mathrm{MAX}$-MGNN. 

In both cases formulae are negation-free, so the augmented layer in the construction of
$\GN_\varphi$ can be omitted. Hence our construction gives rise to an MGNN or $\mathrm{MAX}$-MGNN, depending on the case.
\end{proof}

\restMONMGNN*
\begin{proof}
Construction of a required type GNN from formulae from a given modal logic follows directly from \Cref{thm:lowerBounds}.

For the opposite direction we need to show that for a GNN $\GN$ of a given type there exists an equivalent formula $\varphi$ from a particular modal logic.
Let $\C$ be the class of pointed graphs accepted by $\GN$.
By Theorem~\ref{th:mon-GNN-preservation}, $\C$ is
preserved under: 
embeddings if $\GN$  is an augmented MGNN, 
injective homomorphisms if $\GN$  is an  MGNN, and under homomorphisms if $\GN$  is a  $\mathrm{MAX}$-MGNN.
Thus, by \cite{DBLP:conf/iclr/BarceloKM0RS20} 
there exists a formula  $\varphi$ in the required fragment of modal logic.

It remains to consider the case when $\GN$ is an augmented  $\mathrm{MAX}$-MGNN, and show that it is equivalent to some \EML{} formula.
We observe that every $\mathrm{MAX}$-MGNN can be seen as a bounded AC-GNN with set-based aggregation, denoted as $\GNN{\s}{\ac}$, as defined in \cite{boundedGNNs}.
It has been shown that $\GNN{\s}{\ac}$s have the same expressive power as $\ML$ \cite{boundedGNNs}, so we obtain that $\GN$ is equivalent to some $\ML$ formula.
Furthermore, recall that  augmented MGNNs are preserved under embeddings (\Cref{th:mon-GNN-preservation}) so $\GN$, as a special case of an augmented MGNN is also preserved under embeddings.
Consequently, it allows us to apply a well-known Rosen's preservation theorem for $\ML$, stating that an \ML{} formula is preserved under embeddings if and only if it is equivalent to an \EML{} formula \cite[Proposition 6]{DBLP:journals/jolli/Rosen97}.  
As a result we obtain that $\GN$ is equivalent to an \EML{} formula.
\end{proof}

\end{document}